\documentclass[11pt]{article}

\usepackage[preprint]{acl}

\usepackage{times}
\usepackage{latexsym}
\usepackage[T1]{fontenc}
\usepackage[utf8]{inputenc}
\usepackage{microtype}

\usepackage{hyperref}
\usepackage{url}
\usepackage{booktabs}
\usepackage{multirow}
\usepackage{amsfonts}
\usepackage{nicefrac}
\usepackage{xcolor}
\usepackage{graphicx}
\usepackage[normalem]{ulem}  
\makeatletter
\newcommand{\mathstrike}[1]{%
  \mathchoice
    {\mathstrike@aux{\displaystyle}{#1}}%
    {\mathstrike@aux{\textstyle}{#1}}%
    {\mathstrike@aux{\scriptstyle}{#1}}%
    {\mathstrike@aux{\scriptscriptstyle}{#1}}%
}
\newcommand{\mathstrike@aux}[2]{%
  \begingroup
  \setbox0=\hbox{\textcolor{red}{$#1#2$}}%
  \dimen0=\ht0
  \advance\dimen0 by -\dp0
  \divide\dimen0 by 2
  \leavevmode
  \rlap{\raisebox{\dimen0}{\textcolor{red}{\rule{\wd0}{0.45pt}}}}%
  \box0
  \endgroup
}
\makeatother
\usepackage{longtable}
\usepackage{amssymb}
\usepackage{amsthm}
\usepackage{amsmath}

\newtheorem{lemma}{Lemma}
\newtheorem{proposition}{Proposition}
\newtheorem{theorem}{Theorem}

\title{HARP: Efficient Data Selection for Finetuning Large Language Models}

\author{
\textbf{Ning Wang}\textsuperscript{1}\thanks{Equal contribution; author order determined by coin flip.}
\quad
\textbf{Zhengxin Zhang}\textsuperscript{1}\footnotemark[1]
\quad
\textbf{Maosen Tang}\textsuperscript{1}
\quad
\\\textbf{Yitang Gao}\textsuperscript{2}
\quad
\textbf{Claire Cardie}\textsuperscript{1}
\quad
\textbf{Sainyam Galhotra}\textsuperscript{1}
\\[0.5em]
\textsuperscript{1}Cornell University
\quad
\textsuperscript{2}The Hong Kong University of Science and Technology
\\[0.5em]
\texttt{\{nw366, zz865\}@cornell.edu}
}

\begin{document}
\maketitle

\begin{abstract}
Finetuning data selection requires balancing two competing goals:
selecting examples that improve the downstream objective, and doing so
without repeatedly finetuning models. Train-free selectors are scalable but rely on proxies such as
embedding similarity or clustering, which may not match the target
objective. Train-based selectors better reflect downstream utility through
gradient signals, subset evaluation, or Shapley attribution, but
require many costly train--evaluate iterations. We propose 
\textbf{H}ierarchical \textbf{A}ctive \textbf{R}egion
\textbf{P}runing (\textbf{HARP}), an efficient train-based selector that preserves
downstream alignment while reducing selection cost. HARP organizes the
training pool into a node--leaf hierarchy, evaluates only representative
leaves, and infers unmeasured utilities with empirical Bayes posteriors.
It then selects data using two complementary envelopes:
\textbf{HARP-C}, which conservatively controls redundancy, and
\textbf{HARP-E}, which additively rewards complementary regions. 
We theoretically show that, under local smoothness and bounded estimation error, HARP controls selection error while reducing train--evaluate cost. We further validate that HARP variants achieve the best result and outperform the strongest baseline by up to
$+8.9$ points, while using roughly $7\times$ fewer training examples. 
\end{abstract}

\section{Introduction}
Finetuning \citep{zhang2024quantized,chung2024scaling,longpre2023flan,wang-etal-2023-self-instruct,dettmers2023qlora,hu2022lora} has become an effective approach for specializing foundation large language models (LLMs) to follow task specifications, improve downstream performance, and adapt to domain-specific needs.
Recent studies \citep{zhou2023lima} show that a relatively small subset of high-quality data can match or even surpass the performance achieved using the full dataset. This suggests that current finetuning datasets are often noisy and inefficient, containing redundancy, near-duplicate instances, low-signal examples, and distributional mismatches with the target evaluation setting.
Nevertheless, building high-quality finetuning datasets requires extensive expert labor, making it impractical at scale.
Recent works are primarily based on train-free data selection and train-heavy evaluation-based data selection. Train-free selection methods \citep{deng2025influence,ge-etal-2024-clustering,tsds,yang-etal-2025-measuring,dq,dsir}, which typically uses heuristics or representations (e.g., embeddings, clustering, diversity/coverage objectives, or quality proxies) to choose subsets without repeatedly finetuning candidate subsets. These methods are cheap and can be applied before training. However, their proxies can be misaligned with the actual downstream objective 
For example, some methods construct a proxy subset by ranking training examples by embedding similarity to the test set. However, when the downstream objective is task accuracy, high embedding similarity fails to guarantee high marginal utility for improving the downstream objective. 
Train-based data evaluation, which explicitly estimates the contribution of training examples (or groups of examples) to a downstream objective via influence/gradient-based approximations \cite{less, wang2025nice}, or Shapley-style marginal contribution estimates \cite{shed}. 
While objective-faithful, these approaches require substantial computational cost, driven by the proxy dataset size per cycle and the total number of train–evaluate iterations. This motivates a
middle ground: a selector that retains the downstream alignment of
train-based evaluation, but avoids exhaustively evaluating many candidates.
In this paper, we propose \textbf{H}ierarchical \textbf{A}ctive \textbf{R}egion \textbf{P}runing,\textbf{HARP}, an efficient training-based data selection framework for LLM finetuning. HARP organizes the training set into a node--leaf hierarchy, where each leaf contains a group of examples, and each node contains a group of related leaves. Instead of evaluating every leaf, HARP estimates the downstream utility of a small set of representative leaves and infers the utility of unmeasured leaves through empirical Bayes posteriors. Building on these estimates, we introduce two complementary selection envelopes: \textbf{HARP-C} and \textbf{HARP-E}. HARP-C uses a \textbf{C}onservative coverage envelope to avoid over-counting utility under redundancy, while HARP-E uses an \textbf{E}xpansive additive envelope to credit multiple complementary leaves within the same domain. We theoretically show that, under local smoothness and bounded estimation error, HARP provides stable utility estimates and controls selection error while substantially reducing the train-evaluate iterations. 

We evaluate HARP across $3$ base models, $3$ fine-tuning
datasets, $6$ baselines and evaluating on $4$ reasoning benchmarks.
HARP variants achieve the best result in $\mathbf{28}$ settings ($78\%$)
and outperform the strongest baseline, SHED-QOCS, by up to
$+8.9$ points on average, while using roughly $7\times$ fewer training
examples than the $10$k-budget baselines and $56\times$ than full fintuning. The two envelopes show
complementary strengths: \textsc{HARP-C} is more effective on noisy or
heterogeneous data, while \textsc{HARP-E} performs best on clean, on-task data.

\section{Related Work}
\noindent\textbf{Train-free data selection methods for finetuning.}
In contrast, \emph{train-free} methods select subsets using inexpensive \emph{proxy signals} that avoid repeatedly finetuning candidate subsets. A common strategy is \emph{geometry-driven selection}, where methods use embeddings together with clustering, submodular objectives, or diversity sampling to retain representative and non-redundant examples (e.g., \cite{ge-etal-2024-clustering,tsds,bukharin2024data}). Another strategy uses \emph{quality proxies}---such as heuristic filters, lightweight model-based scoring, or proxy difficulty/utility estimates---to remove noisy or low-value examples before training; recent LLM-specific instances include self-guided difficulty-based selection, one-shot utility scoring, instruction mining, automatic quality/complexity scoring, and weak-to-strong filtering (e.g., \cite{li2024quantity,li2024one,cao2023instruction,liu2023makes,li2024superfiltering,yang-etal-2025-measuring}). Related work also uses influence-style or representational proxies to estimate usefulness from features or gradients without full retraining (e.g., \cite{deng2025influence}), while adjacent automated data curation methods synthesize or refine finetuning data using similarly cheap signals rather than explicit subset selection (e.g., \cite{chen2023tegit}). These methods scale well to large finetuning corpora, but their proxy objectives can be misaligned with the downstream metric and generally fail to capture \emph{interaction effects} that emerge only under actual training dynamics.

\noindent\textbf{Train-based (objective-aligned) data selection methods for finetuning.}
A complementary line of work studies \emph{objective-aligned} data selection, where the value of a training example (or group) is defined by its estimated effect on downstream performance after finetuning.
Early LLM-specific train-based approaches explore several forms of model-in-the-loop selection, including transferred Shapley-value-based data valuation \citep{schoch2023data}, iterative self-evolving subset refinement \citep{wu2023self}, and active task selection based on prompt sensitivity \citep{kung2023active}.
\citet{shed} further moves closer to the true ``fine-tune then evaluate'' objective by using a Shapley-style refinement framework for instruction finetuning, but this comes with substantial computational overhead.
To reduce the cost of repeated finetuning, \citet{less} proposes an influence-based approximation that builds a reusable gradient datastore and selects data using optimizer-aware gradient similarity to a small target set, enabling targeted capability induction with only a small fraction of the data.
More recently, \citet{wang2025nice} (NICE) extends this direction to \emph{non-differentiable} evaluation metrics via policy-gradient-style influence estimation, bringing data selection closer to the true task objective when loss-based proxies are misaligned.

\noindent\textbf{Finetuning of large language models.}
Supervised finetuning (SFT) adapts pretrained LLMs to follow natural-language instructions using instruction--response data, and has become a standard post-training stage for improving instruction following and generalization. Prior work shows that both scale and mixture design matter: large instruction collections and carefully balanced task mixtures can substantially improve transfer, while parameter-efficient methods reduce the cost of adaptation \citep{zhang2024quantized,chung2024scaling,longpre2023flan,hu2022lora,dettmers2023qlora,zhang2023llama,xu2023baize}. At the data-construction layer, synthetic pipelines bootstrap supervision from seed tasks, teacher models, or reverse/task-grounded generation, spanning Self-Instruct and Alpaca-style recipes as well as later variants such as Stanford Alpaca, GPT-4-based instruction generation, LongForm, and AlpaGasus \citep{wang-etal-2023-self-instruct,alpaca,taori2023stanford,peng2023instruction,koksal2024longform,chen2023alpagasus}. Adjacent post-training work further studies preference-based optimization and scaled feedback data, while domain-specific tuning pipelines reinforce that supervision design and data composition are often as important as raw scale \cite{dubois2023alpacafarm,rafailov2023direct,cui2023ultrafeedback,yue2023mammoth,yu2023metamath,luo2023wizardmath}. Conversely, LIMA shows that a small but carefully curated set can still be highly effective, underscoring SFT's sensitivity to redundancy and example quality \cite{LIMA}.

\section{Method}
\label{sec:method}

\begin{figure*}
    \centering
    \includegraphics[width=\linewidth]{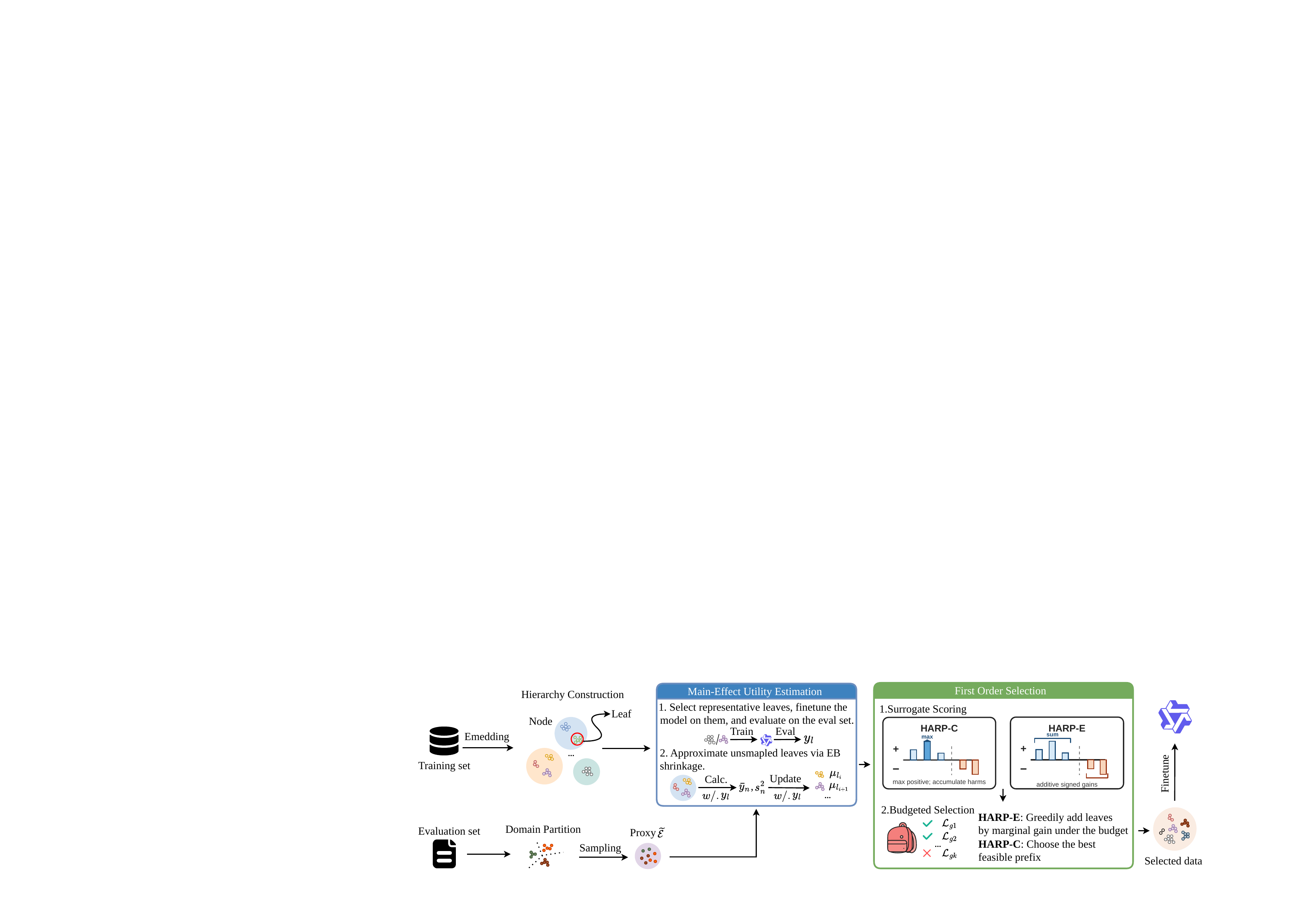}
    \caption{An overview of HARP.}
    \label{fig:framework}
\end{figure*}

\subsection{Problem Setup and Overview}
\label{subsec:problem_setup}
Let
$\mathcal{D}=\{x_i\}_{i=1}^N$ denote the training dataset and let
$\mathcal{E}$ denote an evaluation set. Our goal is to select a subset
of training examples with total size at most $B$ that maximizes downstream
performance. Let $U(A)$ denote the task-averaged downstream utility (e.g., accuracy) after
fine-tuning on $A\subseteq\mathcal{D}$. The ideal objective is
\(
\max_{A\subseteq\mathcal{D},\, |A|\le B} U(A).
\)
However, this ideal objective is not directly tractable in practice.
Evaluating a candidate subset requires a train--evaluate iteration, and the number of budget-feasible subsets grows
combinatorially with the size of the training data. HARP makes this problem
tractable by reducing both costs. First, it constructs a compact proxy
evaluation set $\widetilde{\mathcal{E}}\subseteq\mathcal{E}$ that preserves
domain coverage, so that candidate subsets can be
compared without repeatedly evaluating on the full validation set. Second, it
organizes the training data into a hierarchy and selects leaves rather than
individual examples, reducing the number of candidate units that must be
measured. Once the hierarchy below defines leaves \(\{\mathcal{L}_g\}_{g=1}^{G}\), each selected leaf set \(S\subseteq[G]\) corresponds to \(A(S)=\bigcup_{g\in S}\mathcal{L}_g\), with cost \(c(S)=\sum_{g\in S}|\mathcal{L}_g|\), and we write \(U(S)=U(A(S))\). HARP optimizes the leaf-restricted target \(\max_{S\subseteq[G],\,c(S)\le B}U(S)\), which is a restriction of the example-level objective above. This leaf-level representation lets us score the leaf-restricted budgeted objective with a first-order utility approximation: HARP estimates
the contribution of each leaf and then aggregates these main
effects to score candidate selections. We introduce two complementary
domain-wise envelopes. \textbf{HARP-C} uses a conservative coverage envelope,
crediting only the strongest positive leaf in each domain while accumulating all
estimated harms; this limits over-counting when several leaves capture the same domain. \textbf{HARP-E} uses an expansive additive envelope, summing positive
contributions across leaves so that multiple complementary leaves can receive
credit within the same domain.

\subsection{Domain-Aware Proxy Evaluation Set}
\label{subsec:proxy_eval}

We construct the proxy evaluation set used for reducing the evaluation cost in each train-evaluate iteration. The proxy set \(\widetilde{\mathcal{E}}\subseteq\mathcal{E}\) is designed to preserve two properties of the full evaluation set: coverage across evaluation domains and diversity within each domain. We first partition the evaluation set into raw domains, \(\mathcal{E}=\bigcup_{d=1}^{D_{\mathrm{raw}}}\mathcal{E}^{\mathrm{raw}}_d\), where a raw domain may correspond to an explicit benchmark task, a labeled category, or an embedding-derived cluster when labels are unavailable. We apply a raw-domain size floor \(k_{\mathrm{dom}}\) and merge undersized raw domains into their nearest eligible domain by embedding-centroid distance; if no raw domain initially satisfies the floor, the largest raw domain is treated as eligible, then relabel the resulting domains as \(\mathcal{E}=\bigcup_{d=1}^{D}\mathcal{E}_d\), where \(D\) is the number of evaluation domains used by HARP. We then sample proxy examples from each retained domain. Let \(\rho\) be the target proxy sampling fraction and \(K_{\mathrm{proxy}}\) be the minimum desired total proxy size. We set \(\rho_{\mathrm{eff}}={\min\{1,\max(\rho,K_{\mathrm{proxy}}/|\mathcal{E}|)\}}\), so that the proxy set remains sufficiently large when \(\rho\) is small. For each domain \(d\), we select \(k_d=\min(|\mathcal{E}_d|,\max(1,\lceil \rho_{\mathrm{eff}}|\mathcal{E}_d|\rceil))\) examples and define \(\widetilde{\mathcal{E}}_d\subseteq\mathcal{E}_d\) with \(|\widetilde{\mathcal{E}}_d|=k_d\). This choice avoids oversampling any domain while ensuring that every retained domain contributes to the proxy set. Within each domain, if \(k_d=|\mathcal{E}_d|\), we take all examples in \(\mathcal{E}_d\); otherwise, we apply \(k\)-means with \(k_d\) clusters over evaluation embeddings and select the example closest to each cluster centroid, spreading the proxy set across distinct regions of the domain. The final proxy evaluation set is \(\widetilde{\mathcal{E}}=\bigcup_{d=1}^{D}\widetilde{\mathcal{E}}_d\). For bootstrap resampling, we separately group proxy domains that contain fewer than \(k_{\mathrm{boot}}\) examples into larger bootstrap buckets using the same centroid-distance criterion. After the proxy set is fixed, we measure the base model's utility on each proxy domain as \(v_0(d)=|\widetilde{\mathcal{E}}_d|^{-1}\sum_{e_j\in\widetilde{\mathcal{E}}_d}m(\hat y_j,y_j)\), where \(m(\hat y_j,y_j)\) is the evaluation metric for prediction \(\hat y_j\) and gold answer \(y_j\). These domain-level base utilities serve as the reference point for the leaf-level main effects estimated in the next section. Appendix~\ref{app:proxy_stability} bounds the coverage-based stability of this proxy evaluation procedure.

\subsection{Hierarchical Node--Leaf Construction}
\label{subsec:hierarchy}

We further reduce the number of train--evaluate iterations by treating leaves as the atomic selection units instead of individual examples. Let \(\hat z_i\in\mathbb{R}^p\) denote the normalized embedding of training example \(x_i\). HARP builds a two-level hierarchy over the training pool: node groups provide coarse local regions for sharing utility estimates, and leaves \(\{\mathcal{L}_g\}_{g=1}^{G}\) define the atomic units used for selection. For any subset \(Q\subseteq\mathcal{D}\) \ and a specified target number \(M_Q\) of child groups, we use a coverage-oriented anchor partitioning procedure. The first anchor is chosen as the example closest to the centroid of \(Q\). Subsequent anchors are selected by the farthest-first rule \(a_r=\arg\max_{x_i\in Q}\min_{a\in A_{r-1}}(1-\hat z_i^\top \hat z_a)\), where \(A_{r-1}\) is the set of previously selected anchors. Each example in \(Q\) is then assigned to its closest anchor by cosine similarity. This procedure is first used to form node groups and then applied within each node to produce leaves, recursively splitting groups that exceed the \ maximum leaf size \(C_{\max}\) and merging leaves with fewer than \(C_{\min}\) examples into the nearest valid sibling by centroid similarity. The resulting leaves cover diverse regions of the training pool while keeping each selection unit large enough for stable utility estimation. As an immediate upper bound, replacing individual examples with leaves reduces exhaustive utility estimation from \(N\) individual-example queries to at most \(G\) leaf-level queries, with \(G\le N/C_{\min}\) whenever the final leaves satisfy the minimum-size constraint. Appendix~\ref{app:hierarchy_cost} formalizes this iteration-reduction bound.

\subsection{Main-Effect Estimation}
\label{subsec:main_effect_estimation}

We then estimate the utility of each leaf. Let \(p(g)\) denote the parent node of leaf \(g\), and let \(\mathcal{P}_p\) denote the leaves under node \(p\). Each leaf is represented by its mean embedding \(\bar z_g=|\mathcal{L}_g|^{-1}\sum_{x_i\in\mathcal{L}_g}\hat z_i\). For domain \(d\), define the centered main effect \(\phi_g(d)=u_d(\mathcal{L}_g)-v_0(d)\), where \(u_d(\mathcal{L}_g)\) is the proxy-domain utility after fine-tuning on \(\mathcal{L}_g\), and \(v_0(d)\) is the base-model domain utility. Since evaluating all \(G\) leaves is expensive, we measure only representative leaves \(\mathcal{R}_p\subseteq\mathcal{P}_p\)\ with \(|\mathcal{R}_p|\ge 1\) for every nonempty node \(p\) in each node, selected by size stratification and farthest-first sampling over \(\bar z_g\). For each measured representative \(r\), we observe \(y_r(d)=u_d(\mathcal{L}_r)-v_0(d)\), the noisy measured main effect. For an unmeasured leaf \(g\), we first form a local estimate \(\widetilde y_g(d)\) from measured representatives in the same node: \(\widetilde y_g(d)=\sum_{r\in\mathcal{R}_{p(g)}}\alpha_{g,r}y_r(d)\). Here \(\alpha_{g,r}\) is the similarity weight assigned from representative \(r\) to leaf \(g\), defined by \(\alpha_{g,r}=\kappa(\bar z_g,\bar z_r)/\sum_{r'\in\mathcal{R}_{p(g)}}\kappa(\bar z_g,\bar z_{r'})\), where \(\kappa(\bar z_g,\bar z_r)=\exp(((\bar z_g/\|\bar z_g\|_2)^\top(\bar z_r/\|\bar z_r\|_2))/\lambda)\) is a cosine-similarity kernel computed from normalized leaf mean embeddings and \(\lambda>0\) controls locality. The interpolation step gives a local representative-based estimate of \(\phi_g(d)\), with its approximation error controlled separately by the representative interpolation bound. We then apply empirical-Bayes shrinkage: \(\widehat{\phi}_g(d)=\rho_g(d)\widetilde y_g(d)+(1-\rho_g(d))\mu_0(d)\). Here \(\widehat{\phi}_g(d)\) is the final estimated main effect, \(\mu_0(d)\) is the global measured-effect mean, and \(\rho_g(d)=\tau^2(d)/(\tau^2(d)+\sigma_{p(g)}^2(d)/n_{\mathrm{eff}}(g))\) is the shrinkage weight. The terms \(\tau^2(d)\) and \(\sigma_{p(g)}^2(d)\) are the across-node and within-node variances, respectively, and \(n_{\mathrm{eff}}(g)=1/\sum_{r\in\mathcal{R}_{p(g)}}\alpha_{g,r}^2\) is the effective number of representatives supporting leaf \(g\). For directly measured leaves, we set \(\widehat{\phi}_g(d)=y_g(d)\). This yields the domain-wise main-effect matrix \(\widehat{\Phi}=[\widehat{\phi}_g(d)]_{g\in[G],d\in[D]}\). \ We keep active domains \(\mathcal{A}_{\mathrm{dom}}=\{d:\max_g|\widehat{\phi}_g(d)|>\epsilon_{\mathrm{dom}}\}\). If \(\mathcal{A}_{\mathrm{dom}}=\varnothing\), we set \(\mathcal{A}_{\mathrm{dom}}=[D]\). We then normalize their weights \(w_d\), write \(b_d=v_0(d)\), and fix \(\mathcal{A}_{\mathrm{dom}}\), \(w_d\), and \(b_d\) before defining both true and estimated surrogate objectives. This keeps domains with either estimated gains or estimated harms active, which is necessary because the conservative objective below accumulates negative effects rather than ignoring them. We justify the interpolation error, shrinkage estimator, and train--evaluate iteration reduction in Appendices~\ref{app:rep_interpolation}, \ref{app:eb_shrinkage}, and \ref{app:main_effect_query_reduction}.

\subsection{HARP-C: Conservative Coverage Envelope}
\label{subsec:harp_fo_cons}

An additive first-order objective can over-count leaves: if multiple leaves improve the same domain, summing their positive effects may predict unrealistically large gains even when domain performance saturates. We therefore use a conservative first-order envelope. For each leaf \(g\) and active domain \(d\), write \(\widehat{\phi}_g^+(d)=\max(\widehat{\phi}_g(d),0)\) and \(\widehat{\phi}_g^-(d)=\max(-\widehat{\phi}_g(d),0)\). Given a selected leaf set \(S\), we predict the conservative domain utility as \(\widehat y_d^{\mathrm{C}}(S)=\mathrm{clip}_{[0,1]}(b_d+\max_{g\in S}\widehat{\phi}_g^+(d)-\sum_{g\in S}\widehat{\phi}_g^-(d))\), with the convention \(\max_{g\in\varnothing}\widehat{\phi}_g^+(d)=0\), and define \(\widehat U_{\mathrm{C}}(S)=\sum_{d\in\mathcal{A}_{\mathrm{dom}}}w_d\widehat y_d^{\mathrm{C}}(S)\). This envelope credits only the strongest positive leaf in each domain while accumulating all estimated harms. It therefore favors leaves whose gains remain useful under a conservative view of within-domain redundancy. HARP-C ranks leaves greedily by marginal gain under \(\widehat U_{\mathrm{C}}\). Let \(c_g=|\mathcal{L}_g|\) be the cost of leaf \(g\). Starting from \(S_0=\varnothing\), at step \(k\) \ we define the feasible candidate set
\(
\mathcal{F}_k
=
\{g\notin S_k:\ c(S_k)+c_g\le B\}.
\)
If \(\mathcal{F}_k=\varnothing\), the greedy construction stops. Otherwise, we choose
\(
g_{k+1}
=
\operatorname*{arg\,max}_{g\in\mathcal{F}_k}
\left[
\widehat U_{\mathrm{C}}(S_k\cup\{g\})
-
\widehat U_{\mathrm{C}}(S_k)
\right],
\)
and set \(S_{k+1}=S_k\cup\{g_{k+1}\}\), producing a ranked sequence of feasible leaf prefixes. The final HARP-C selection is the best feasible prefix,
\(
\widehat S_{\mathrm{C}}
=
S_{k^\star},
\quad
k^\star
=
\operatorname*{arg\,max}_{k:\,c(S_k)\le B}
\widehat U_{\mathrm{C}}(S_k),
\)
where \(c(S_k)=\sum_{g\in S_k}c_g\). The budget value \(B\) is set in the experimental protocol.

\begin{theorem}[First-order stability of HARP-C]
\label{thm:harp_c_stability}
Let \(U_{\mathrm{C}}\) be the conservative coverage first-order utility defined
with the true main effects \(\phi_g(d)\), and let \(\widehat U_{\mathrm{C}}\)
be its estimated version defined with \(\widehat{\phi}_g(d)\). Let
\(S^\star\in\arg\max_{c(S)\le B}U(S)\), and let \(\widehat S_{\mathrm{C}}\) be the set returned by HARP-C.
Suppose that, for every feasible set \(S\),
\(|U(S)-U_{\mathrm{C}}(S)|\le \epsilon_{\mathrm{C}}\) and
\(|\widehat U_{\mathrm{C}}(S)-U_{\mathrm{C}}(S)|\le \delta_{\mathrm{C}}\).
If the greedy procedure returns an \(\epsilon_{\mathrm{opt}}\)-approximate
maximizer of \(\widehat U_{\mathrm{C}}\) under the budget, then
\(U(S^\star)-U(\widehat S_{\mathrm{C}})
\le 2\epsilon_{\mathrm{C}}+2\delta_{\mathrm{C}}+\epsilon_{\mathrm{opt}}\).
Moreover, if every feasible set contains at most \(K\) leaves, the active-domain set is fixed, the active-domain
weights sum to one, \(b_d\) and \(w_d\) are fixed, and
\(|\widehat{\phi}_g(d)-\phi_g(d)|\le \eta\) for all \(g,d\), then
\(\delta_{\mathrm{C}}\le (K+1)\eta\).
\end{theorem}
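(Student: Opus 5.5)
The proof has two independent parts: a standard telescoping argument for the regret bound, and a Lipschitz estimate of the conservative envelope in the main effects. Throughout, we use that both \(S^\star\) and \(\widehat S_{\mathrm{C}}\) are budget-feasible, so both uniform hypotheses apply to them.

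For the regret bound, we interpret the \(\epsilon_{\mathrm{opt}}\)-approximation assumption as
\[
\widehat U_{\mathrm{C}}(\widehat S_{\mathrm{C}})\ge \max_{c(S)\le B}\widehat U_{\mathrm{C}}(S)-\epsilon_{\mathrm{opt}},
\]
and in particular \(\widehat U_{\mathrm{C}}(\widehat S_{\mathrm{C}})\ge \widehat U_{\mathrm{C}}(S^\star)-\epsilon_{\mathrm{opt}}\). We then decompose
\[
U(S^\star)-U(\widehat S_{\mathrm{C}})=\bigl[U(S^\star)-U_{\mathrm{C}}(S^\star)\bigr]+\bigl[U_{\mathrm{C}}(S^\star)-\widehat U_{\mathrm{C}}(S^\star)\bigr]+\bigl[\widehat U_{\mathrm{C}}(S^\star)-\widehat U_{\mathrm{C}}(\widehat S_{\mathrm{C}})\bigr]+\bigl[\widehat U_{\mathrm{C}}(\widehat S_{\mathrm{C}})-U_{\mathrm{C}}(\widehat S_{\mathrm{C}})\bigr]+\bigl[U_{\mathrm{C}}(\widehat S_{\mathrm{C}})-U(\widehat S_{\mathrm{C}})\bigr].
\]
The first and fifth brackets are each at most \(\epsilon_{\mathrm{C}}\), the second and fourth are each at most \(\delta_{\mathrm{C}}\), and the middle one is at most \(\epsilon_{\mathrm{opt}}\). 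Summing gives the claimed \(2\epsilon_{\mathrm{C}}+2\delta_{\mathrm{C}}+\epsilon_{\mathrm{opt}}\).

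For the bound \(\delta_{\mathrm{C}}\le (K+1)\eta\), fix a feasible \(S\) with \(|S|\le K\) and an active domain \(d\). Because the active-domain set, \(w_d\) and \(b_d\) are shared by \(U_{\mathrm{C}}\) and \(\widehat U_{\mathrm{C}}\), it suffices to bound \(|\widehat y^{\mathrm{C}}_d(S)-y^{\mathrm{C}}_d(S)|\) by \((K+1)\eta\). Averaging with nonnegative weights summing to one then preserves the bound. We proceed in four steps.

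\begin{enumerate}
\item Since \(\mathrm{clip}_{[0,1]}\) is 1-Lipschitz, it is enough to compare the unclipped arguments. The common term \(b_d\) cancels.
\item The maps \(t\mapsto\max(t,0)\) and \(t\mapsto\max(-t,0)\) are 1-Lipschitz, so \(|\widehat\phi_g^\pm(d)-\phi_g^\pm(d)|\le\eta\) for every \(g\).
\item The maximum of finitely many numbers is 1-Lipschitz in the sup norm, so \(|\max_{g\in S}\widehat\phi_g^+(d)-\max_{g\in S}\phi_g^+(d)|\le\eta\). This also holds for \(S=\varnothing\), where both maxima equal \(0\) by convention.
\item The harm sums differ by at most \(\sum_{g\in S}\eta\le K\eta\).
\end{enumerate}

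The triangle inequality then gives a per-domain difference of at most \((K+1)\eta\). This bound holds uniformly over feasible \(S\), which yields \(\delta_{\mathrm{C}}\le(K+1)\eta\).

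The only delicate point is step 3. The maximizing leaves for \(\widehat\phi^+\) and for \(\phi^+\) may differ, so we cannot compare term by term. Instead we use the standard inequality \(|\max_i a_i-\max_i b_i|\le\max_i|a_i-b_i|\). This is also why the positive part contributes only a single \(\eta\), whereas the harm sum contributes up to \(K\eta\). Everything else is routine bookkeeping.
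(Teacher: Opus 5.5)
Your proposal is correct and follows the paper's argument essentially step for step: the regret bound is the same four-inequality chain used in the shared stability template (Theorem~\ref{thm:shared_fo_stability}), and the sensitivity bound is the same Lipschitz argument as in Lemma~\ref{lem:fo_envelope_sensitivity} (one $\eta$ from the maximum of positive parts, $K\eta$ from the harm sum, then $1$-Lipschitz clipping and convex weighting). Your explicit justification of the maximum step via $|\max_i a_i-\max_i b_i|\le\max_i|a_i-b_i|$ fills in a step that the paper only asserts.
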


Theorem~\ref{thm:harp_c_stability} follows from the shared first-order
stability template in Theorem~\ref{thm:shared_fo_stability} and the HARP-C
sensitivity bound in Lemma~\ref{lem:fo_envelope_sensitivity}. The result shows
that HARP-C is stable when the conservative coverage envelope is a good
surrogate for the true subset utility, the estimated main effects are accurate,
and the greedy optimization gap is small.

\subsection{HARP-E: Expansive Additive Envelope}
\label{subsec:harp_fo_opt}

HARP-C is conservative because each domain can receive positive credit from single selected leaf. This protects against redundant gains, but it can under-credit genuinely complementary leaves. For example, different leaves may improve the same domain, in which case taking only the maximum positive effect discards the useful additional signal. HARP-E addresses this case with an expansive additive envelope that sums positive contributions while still accumulating estimated harms. For each leaf \(g\) and active domain \(d\), the predicted expansive domain utility for a selected leaf set \(S\) is \(\widehat y_d^{\mathrm{E}}(S)=\mathrm{clip}_{[0,1]}(b_d+\sum_{g\in S}\widehat{\phi}_g(d))\), which is equivalent to summing positive effects and subtracting estimated harms. The aggregated objective is \(\widehat U_{\mathrm{E}}(S)=\sum_{d\in\mathcal{A}_{\mathrm{dom}}}w_d\widehat y_d^{\mathrm{E}}(S)\).HARP-E uses the same feasible set, greedy update, and best-prefix rule as
HARP-C, replacing \(\widehat U_{\mathrm{C}}\) with
\(\widehat U_{\mathrm{E}}\). The two envelopes are complementary: HARP-C limits over-counting under redundancy, while HARP-E can recover multiple positive contributions when the same domain leaves provide complementary information.

\begin{theorem}[First-order stability of HARP-E]
\label{thm:harp_e_stability}
Let \(U_{\mathrm{E}}\) be the expansive additive first-order utility defined
with the true main effects \(\phi_g(d)\), and let \(\widehat U_{\mathrm{E}}\)
be its estimated version defined with \(\widehat{\phi}_g(d)\). Let
\(S^\star\in\arg\max_{c(S)\le B}U(S)\), and let \(\widehat S_{\mathrm{E}}\) be the set returned by HARP-E.
Suppose that, for every feasible set \(S\),
\(|U(S)-U_{\mathrm{E}}(S)|\le \epsilon_{\mathrm{E}}\) and
\(|\widehat U_{\mathrm{E}}(S)-U_{\mathrm{E}}(S)|\le \delta_{\mathrm{E}}\).
If the greedy procedure returns an \(\epsilon_{\mathrm{opt}}\)-approximate
maximizer of \(\widehat U_{\mathrm{E}}\) under the budget, then
\(U(S^\star)-U(\widehat S_{\mathrm{E}})
\le 2\epsilon_{\mathrm{E}}+2\delta_{\mathrm{E}}+\epsilon_{\mathrm{opt}}\).
Moreover, if every feasible set contains at most \(K\) leaves, the active-domain set is fixed, the active-domain
weights sum to one, \(b_d\) and \(w_d\) are fixed, and
\(|\widehat{\phi}_g(d)-\phi_g(d)|\le \eta\) for all \(g,d\), then
\(\delta_{\mathrm{E}}\le {K\eta}\).
\end{theorem}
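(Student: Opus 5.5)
The argument has two parts: a generic surrogate-stability chain for the first claim, and a per-domain Lipschitz estimate for the bound on \(\delta_{\mathrm{E}}\). Neither uses anything specific to HARP-E beyond the definition of \(\widehat U_{\mathrm{E}}\). It parallels Theorem~\ref{thm:harp_c_stability} and can be read as an instance of the same template, with a different envelope-sensitivity constant.

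\textbf{First claim.} I would chain the two uniform approximation hypotheses with the optimization gap. Let \(\widehat S^\dagger\in\arg\max_{c(S)\le B}\widehat U_{\mathrm{E}}(S)\). By assumption, \(\widehat U_{\mathrm{E}}(\widehat S_{\mathrm{E}})\ge \widehat U_{\mathrm{E}}(\widehat S^\dagger)-\epsilon_{\mathrm{opt}}\ge \widehat U_{\mathrm{E}}(S^\star)-\epsilon_{\mathrm{opt}}\), where the last step uses feasibility of \(S^\star\). Combining the two hypotheses gives, for every feasible \(S\),
\[
|U(S)-\widehat U_{\mathrm{E}}(S)|\le \epsilon_{\mathrm{E}}+\delta_{\mathrm{E}}.
\]
Hence
\begin{align*}
U(S^\star)&\le \widehat U_{\mathrm{E}}(S^\star)+\epsilon_{\mathrm{E}}+\delta_{\mathrm{E}}\\
&\le \widehat U_{\mathrm{E}}(\widehat S_{\mathrm{E}})+\epsilon_{\mathrm{opt}}+\epsilon_{\mathrm{E}}+\delta_{\mathrm{E}}\\
&\le U(\widehat S_{\mathrm{E}})+2\epsilon_{\mathrm{E}}+2\delta_{\mathrm{E}}+\epsilon_{\mathrm{opt}}.
\end{align*}
The only bookkeeping to check is that \(\widehat S_{\mathrm{E}}\) is feasible. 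This holds because the best-prefix rule only admits prefixes with \(c(S_k)\le B\).

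\textbf{Second claim.} I would prove a domain-wise sensitivity bound. Fix a feasible \(S\) with \(|S|\le K\) and an active domain \(d\). The map \(t\mapsto \mathrm{clip}_{[0,1]}(b_d+t)\) is 1-Lipschitz. Since \(b_d\) is shared by the true and estimated objectives,
\[
|\widehat y^{\mathrm{E}}_d(S)-y^{\mathrm{E}}_d(S)|\le \Bigl|\sum_{g\in S}(\widehat\phi_g(d)-\phi_g(d))\Bigr|\le |S|\eta\le K\eta .
\]
Because \(\mathcal{A}_{\mathrm{dom}}\) and \(w_d\) are fixed and common to both objectives, the difference \(\widehat U_{\mathrm{E}}(S)-U_{\mathrm{E}}(S)\) is the \(w\)-weighted sum of these per-domain differences. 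Nonnegative weights summing to one then give \(|\widehat U_{\mathrm{E}}(S)-U_{\mathrm{E}}(S)|\le K\eta\) uniformly in \(S\), so \(\delta_{\mathrm{E}}\le K\eta\).

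\textbf{Main obstacle.} The delicate point is ensuring that the active-domain set, the weights, and the baselines are identical in \(U_{\mathrm{E}}\) and \(\widehat U_{\mathrm{E}}\). In the method they are derived from \(\widehat\Phi\), so this is not automatic. The theorem handles it by hypothesis, and the paper fixes them before defining both surrogates, which is the convention I would invoke. Compared with HARP-C, no \(\max\) term appears here, which is why the constant is \(K\) rather than \(K+1\). Since the clipping is 1-Lipschitz, nothing else is lost.
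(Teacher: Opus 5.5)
Your proposal is correct and follows the paper's route. The first claim is the shared stability template (Theorem~\ref{thm:shared_fo_stability}), which the paper proves by the same chain of approximation, estimation, and optimization-gap inequalities; you merely merge the two uniform hypotheses by the triangle inequality first and make explicit the feasibility of \(\widehat S_{\mathrm{E}}\), which the paper leaves implicit. The second claim matches the HARP-E part of Lemma~\ref{lem:fo_envelope_sensitivity}: bound the raw additive envelope difference by \(K\eta\), pass it through the \(1\)-Lipschitz clipping, and average with weights that are nonnegative and sum to one (nonnegativity is stated in that lemma but omitted from the theorem statement, and you correctly rely on it).
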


Theorem~\ref{thm:harp_e_stability} follows from the same shared stability
template as HARP-C, using the HARP-E sensitivity bound in
Lemma~\ref{lem:fo_envelope_sensitivity}. Since HARP-E is additive in the signed main effects, \(\widehat{\phi}_g^+(d)-\widehat{\phi}_g^-(d)=\widehat{\phi}_g(d)\), its sensitivity to main-effect estimation error is bounded by \(K\eta\) under the assumptions of the theorem.

\subsection{Computational Complexity}
\label{subsec:method_complexity}

HARP's dominant cost is main effect estimation, which trains
representative leaves at cost $R \cdot (N/L) \cdot T_L(L)$, where
$T_L(x)$ is the finetuning cost on $x$ examples. Proxy-set construction and
final costs $T_L(K_{\mathrm{proxy}})$ and
$T_L(|S|)$ with $|S| \ll N$. Full details and baseline
comparisons are in Appendix~\ref{app:complexity}.
\section{Experimental Setup}
\label{sec:experiments}

\textbf{Models.}
We evaluate three pre-trained base models: \textsc{Llama-3.1-8B-Base}~\cite{llama3},
\textsc{Qwen3-4B-Base} and \textsc{Qwen3-8B-Base}~\cite{qwen3}. All three are
fine-tuned with LoRA; LoRA configuration, optimizer, and batch settings shared across HARP, and the baselines are in Appendix Table~\ref{tab:hp_shared}.

\noindent\textbf{Training datasets.}
Three SFT sources spanning data quality and skill mix:
\textsc{Self-Instruct}~\cite{wang-etal-2023-self-instruct} (SI; $\sim$82k examples, noisy auto-generated),
\textsc{WizardLM/Evol-Instruct-70k}~\cite{luo2023wizardmath} (Wiz; 70k, clean), and
\textsc{Tulu-3-SFT-mixture}~\cite{tulu3} (Tulu; 100k subset, mixed domains).

\noindent\textbf{Evaluation datasets.}
Four reasoning benchmarks: \textsc{MMLU}~\cite{mmlu} (57 subjects,
$14{,}042$ examples), \textsc{ARC-Challenge}~\cite{arc} ($1{,}172$), \textsc{GSM8K}~\cite{gsm8k}
($1{,}319$), and \textsc{MATH-500}~\cite{math_dataset} ($500$). MMLU and ARC use letter-level
accuracy; GSM8K and MATH use answer-string match. All scoring is performed
via batched vLLM generation with greedy decoding.

\noindent\textbf{Baselines.} 1) Random (Rand): uniform random selection of 10k examples.
2) DSIR~\cite{dsir}: importance sampling on $n$-gram features.
3) DQ~\cite{dq}: data-quality scoring with submodular bin selection.
4) Full Finetuning (Full FT): LoRA finetuning on the entire training dataset.
5) SHED-QWCS (SHED-W) and SHED-QOCS (SHED-O)~\cite{shed}: two
clustering-based Shapley variants.

\noindent\textbf{Hyperparameters and reproducibility.} Details in Appendix Table~\ref{tab:hp_shared} and Table~\ref{tab:hp_harp}.

\section{Experimental Results}
In this paper, we first report main results in \S~\ref{sec:exp_main}, and then discuss the data efficiency in \S ~\ref{sec:exp_efficiency}, and finally show the ablation studies in \S~\ref{sec:exp_ablation}.

\subsection{Main Results}
\label{sec:exp_main}

\begin{table*}[t!]
\centering
\caption{Main results. \textbf{Bold} = per-row best, \underline{underline} = per-row second-best.}
\label{tab:main_results}
\setlength{\tabcolsep}{4pt}
\renewcommand{\arraystretch}{1.05}
\resizebox{\textwidth}{!}{%
\begin{tabular}{@{}lllcccccc@{\hspace{4pt}}cc@{}}
\toprule
& & & \multicolumn{6}{c}{\textbf{Baselines}} & \multicolumn{2}{c}{\textbf{HARP variants}} \\
\cmidrule(lr){4-9} \cmidrule(lr){10-11}
\textbf{Model} & \textbf{Train} & \textbf{Eval} & \textbf{Rand} & \textbf{DSIR} & \textbf{DQ} & \textbf{Full FT} & \textbf{SHED-W} & \textbf{SHED-O} & \textbf{HARP-C} & \textbf{HARP-E} \\
\midrule
\multirow{12}{*}{\textsc{Llama-8B}} & \multirow{4}{*}{SI} & MMLU & $13.8_{\pm20.2}$ & $2.2_{\pm2.9}$ & $8.3_{\pm12.6}$ & $13.6_{\pm1.2}^{\ast}$ & $0.3_{\pm0.5}$ & $45.5_{\pm23.6}$ & \underline{$61.3_{\pm2.7}$} & $\boldsymbol{61.4_{\pm2.0}}$ \\
 &  & ARC & $18.6_{\pm29.8}$ & $0.9_{\pm1.0}$ & $9.0_{\pm10.5}$ & $11.2_{\pm10.0}$ & $1.4_{\pm2.3}$ & \underline{$26.1_{\pm40.0}$} & $\boldsymbol{74.0_{\pm0.8}}$ & $\boldsymbol{74.0_{\pm2.2}}$ \\
 &  & GSM8K & $7.2_{\pm0.5}$ & $10.2_{\pm1.4}$ & $9.3_{\pm2.2}$ & $6.5_{\pm1.2}$ & $10.8_{\pm1.4}$ & $9.4_{\pm1.1}$ & \underline{$15.8_{\pm1.1}$} & $\boldsymbol{17.7_{\pm1.6}}$ \\
 &  & MATH & $4.3_{\pm1.0}$ & $4.0_{\pm1.1}$ & $3.8_{\pm1.2}$ & $2.8_{\pm0.3}$ & $2.2_{\pm1.5}$ & $2.7_{\pm0.6}$ & $\boldsymbol{6.7_{\pm0.6}}$ & \underline{$6.6_{\pm1.7}$} \\
\cmidrule(lr){2-3} \cmidrule(lr){4-11}
 & \multirow{4}{*}{Wiz} & MMLU & $62.4_{\pm0.0}$ & $61.8_{\pm0.3}$ & $62.5_{\pm0.1}$ & $59.8_{\pm0.8}$ & $63.0_{\pm0.3}$ & $62.9_{\pm0.8}$ & $\boldsymbol{63.8_{\pm0.6}}$ & \underline{$63.7_{\pm0.5}$} \\
 &  & ARC & $74.7_{\pm0.1}$ & $72.6_{\pm1.6}$ & $74.4_{\pm1.6}$ & $70.6_{\pm1.6}$ & $\boldsymbol{75.2_{\pm0.2}}$ & $74.4_{\pm1.1}$ & \underline{$74.8_{\pm1.6}$} & $\boldsymbol{75.2_{\pm1.4}}$ \\
 &  & GSM8K & $50.6_{\pm1.4}$ & $49.0_{\pm1.2}$ & $48.5_{\pm2.5}$ & $50.5_{\pm1.7}$ & $48.1_{\pm6.6}$ & \underline{$51.4_{\pm2.6}$} & $49.1_{\pm4.2}$ & $\boldsymbol{51.8_{\pm4.1}}$ \\
 &  & MATH & $8.6_{\pm1.1}$ & $11.0_{\pm1.3}$ & $10.7_{\pm3.0}$ & $10.3_{\pm0.8}$ & $10.9_{\pm0.8}$ & \underline{$11.0_{\pm0.7}$} & $\boldsymbol{12.5_{\pm0.5}}$ & $10.4_{\pm1.1}$ \\
\cmidrule(lr){2-3} \cmidrule(lr){4-11}
 & \multirow{4}{*}{Tulu} & MMLU & $61.8_{\pm1.2}$ & $61.5_{\pm0.5}$ & \underline{$62.4_{\pm0.7}$} & $59.6_{\pm0.9}$ & $60.5_{\pm2.6}$ & $\boldsymbol{62.5_{\pm1.2}}$ & $60.0_{\pm3.6}$ & $61.2_{\pm2.5}$ \\
 &  & ARC & $73.6_{\pm1.3}$ & $73.9_{\pm0.5}$ & $72.7_{\pm1.9}$ & $71.6_{\pm1.2}$ & $73.0_{\pm2.6}$ & $73.9_{\pm0.9}$ & $\boldsymbol{75.9_{\pm1.1}}$ & \underline{$74.7_{\pm1.4}$} \\
 &  & GSM8K & $23.1_{\pm21.7}$ & $42.6_{\pm6.8}$ & $29.3_{\pm3.5}$ & $45.9_{\pm12.9}$ & $24.4_{\pm14.1}$ & $38.8_{\pm24.6}$ & \underline{$60.4_{\pm0.5}$} & $\boldsymbol{64.8_{\pm1.1}}$ \\
 &  & MATH & $14.1_{\pm1.6}$ & $13.7_{\pm0.8}$ & $15.1_{\pm0.1}$ & $12.2_{\pm0.7}$ & $14.2_{\pm2.3}$ & \underline{$15.8_{\pm1.4}$} & $\boldsymbol{17.2_{\pm0.2}}$ & $10.5_{\pm4.6}$ \\
\cmidrule(lr){1-3} \cmidrule(lr){4-11}
\multirow{12}{*}{\textsc{Qwen3-4B}} & \multirow{4}{*}{SI} & MMLU & $31.4_{\pm11.9}$ & $14.1_{\pm8.7}$ & $30.2_{\pm17.7}$ & $25.8_{\pm11.9}$ & $31.9_{\pm6.2}$ & \underline{$64.1_{\pm4.3}$} & $\boldsymbol{67.9_{\pm1.0}}$ & $50.1_{\pm7.0}$ \\
 &  & ARC & $49.8_{\pm18.5}$ & $35.2_{\pm3.5}$ & $29.9_{\pm25.1}$ & $41.3_{\pm22.2}$ & $48.4_{\pm42.0}$ & $81.9_{\pm4.4}$ & $\boldsymbol{85.7_{\pm1.1}}$ & \underline{$85.2_{\pm1.4}$} \\
 &  & GSM8K & $17.7_{\pm5.2}$ & $11.8_{\pm2.5}$ & $18.2_{\pm5.4}$ & $9.3_{\pm1.3}$ & $20.2_{\pm9.3}$ & $32.6_{\pm18.4}$ & $\boldsymbol{78.2_{\pm2.6}}$ & \underline{$77.6_{\pm3.7}$} \\
 &  & MATH & $10.9_{\pm0.4}$ & $10.7_{\pm1.4}$ & $12.7_{\pm2.2}$ & $9.3_{\pm0.1}$ & $13.2_{\pm3.7}$ & $13.6_{\pm1.3}$ & $\boldsymbol{36.2_{\pm2.9}}$ & \underline{$31.8_{\pm4.3}$} \\
\cmidrule(lr){2-3} \cmidrule(lr){4-11}
 & \multirow{4}{*}{Wiz} & MMLU & $54.1_{\pm1.1}$ & $47.6_{\pm4.4}$ & $53.0_{\pm11.3}$ & $\boldsymbol{65.8_{\pm2.1}}$ & $51.9_{\pm5.0}$ & \underline{$54.6_{\pm3.9}$} & $53.4_{\pm3.1}$ & $54.0_{\pm7.2}$ \\
 &  & ARC & $60.9_{\pm3.9}$ & $57.8_{\pm11.1}$ & $61.2_{\pm19.3}$ & $\boldsymbol{83.1_{\pm1.4}}$ & $64.8_{\pm4.6}$ & $54.6_{\pm10.2}$ & $59.8_{\pm7.3}$ & \underline{$77.0_{\pm1.1}$} \\
 &  & GSM8K & $86.3_{\pm0.9}$ & $84.7_{\pm1.1}$ & $85.3_{\pm0.6}$ & $78.1_{\pm4.5}$ & $87.1_{\pm0.9}$ & $\boldsymbol{87.7_{\pm1.0}}$ & $78.4_{\pm8.1}$ & \underline{$87.3_{\pm0.6}$} \\
 &  & MATH & $42.5_{\pm1.6}$ & $41.3_{\pm1.3}$ & $43.3_{\pm1.2}$ & $41.7_{\pm1.8}$ & $\boldsymbol{45.1_{\pm2.7}}$ & $43.7_{\pm1.4}$ & $43.4_{\pm1.1}$ & \underline{$44.6_{\pm2.0}$} \\
\cmidrule(lr){2-3} \cmidrule(lr){4-11}
 & \multirow{4}{*}{Tulu} & MMLU & $71.0_{\pm0.8}$ & $65.4_{\pm2.4}$ & \underline{$71.5_{\pm0.7}$} & $69.8_{\pm0.5}$ & $66.9_{\pm5.6}$ & $64.6_{\pm5.3}$ & $66.9_{\pm8.0}$ & $\boldsymbol{71.9_{\pm0.8}}$ \\
 &  & ARC & \underline{$86.0_{\pm1.2}$} & $73.5_{\pm9.2}$ & $\boldsymbol{86.4_{\pm0.4}}$ & $85.3_{\pm0.4}$ & $82.5_{\pm3.6}$ & $85.5_{\pm1.6}$ & $83.2_{\pm3.3}$ & $81.0_{\pm3.5}$ \\
 &  & GSM8K & $82.3_{\pm1.4}$ & $76.5_{\pm2.3}$ & $75.7_{\pm4.0}$ & $76.3_{\pm1.9}$ & $76.4_{\pm2.2}$ & $83.0_{\pm2.1}$ & $\boldsymbol{86.1_{\pm1.6}}$ & \underline{$83.2_{\pm1.4}$} \\
 &  & MATH & \underline{$45.3_{\pm0.4}$} & $42.3_{\pm1.3}$ & $44.5_{\pm0.8}$ & $41.1_{\pm1.4}$ & $44.1_{\pm2.4}$ & $44.8_{\pm2.6}$ & $\boldsymbol{45.5_{\pm2.9}}$ & $44.7_{\pm2.5}$ \\
\cmidrule(lr){1-3} \cmidrule(lr){4-11}
\multirow{12}{*}{\textsc{Qwen3-8B}} & \multirow{4}{*}{SI} & MMLU & $49.2_{\pm12.4}$ & $28.8_{\pm17.2}$ & $29.1_{\pm4.6}$ & $21.4_{\pm9.6}$ & $33.2_{\pm29.5}$ & $64.0_{\pm9.6}$ & $\boldsymbol{71.4_{\pm1.1}}$ & \underline{$69.5_{\pm1.9}$} \\
 &  & ARC & $67.2_{\pm13.7}$ & $28.6_{\pm15.1}$ & $39.5_{\pm16.7}$ & $33.7_{\pm10.8}$ & $35.8_{\pm27.0}$ & $69.3_{\pm18.3}$ & $\boldsymbol{89.6_{\pm0.8}}$ & \underline{$89.1_{\pm0.4}$} \\
 &  & GSM8K & $11.5_{\pm2.6}$ & $10.5_{\pm2.0}$ & $12.3_{\pm2.7}$ & $5.5_{\pm0.3}$ & $11.9_{\pm0.9}$ & $12.5_{\pm2.5}$ & $\boldsymbol{75.5_{\pm5.0}}$ & \underline{$50.9_{\pm1.0}$} \\
 &  & MATH & $10.0_{\pm1.2}$ & $7.8_{\pm0.5}$ & $11.6_{\pm1.9}$ & $7.7_{\pm0.5}$ & $4.7_{\pm4.0}$ & $10.6_{\pm2.5}$ & $\boldsymbol{33.7_{\pm3.2}}$ & \underline{$25.1_{\pm0.3}$} \\
\cmidrule(lr){2-3} \cmidrule(lr){4-11}
 & \multirow{4}{*}{Wiz} & MMLU & $71.3_{\pm0.2}$ & $69.4_{\pm2.3}$ & $68.2_{\pm3.3}$ & $65.6_{\pm11.9}^{\ast}$ & $71.5_{\pm2.0}$ & $71.5_{\pm2.6}$ & \underline{$72.1_{\pm0.3}$} & $\boldsymbol{72.6_{\pm2.7}}$ \\
 &  & ARC & $84.0_{\pm2.4}$ & $82.5_{\pm1.4}$ & $74.2_{\pm14.2}$ & $79.9_{\pm11.8}^{\ast}$ & \underline{$85.6_{\pm1.4}$} & $85.5_{\pm3.1}$ & $83.5_{\pm0.9}$ & $\boldsymbol{87.0_{\pm1.3}}$ \\
 &  & GSM8K & $84.7_{\pm3.1}$ & $84.5_{\pm1.2}$ & $86.5_{\pm0.8}$ & $85.3_{\pm0.4}^{\ast}$ & $64.6_{\pm38.2}$ & \underline{$87.2_{\pm1.2}$} & $\boldsymbol{87.8_{\pm1.9}}$ & $86.5_{\pm4.7}$ \\
 &  & MATH & $45.7_{\pm2.3}$ & $44.2_{\pm0.9}$ & \underline{$46.3_{\pm1.7}$} & $43.5_{\pm2.9}^{\ast}$ & $46.3_{\pm1.1}$ & $\boldsymbol{46.7_{\pm1.6}}$ & $46.2_{\pm1.4}$ & $45.5_{\pm2.3}$ \\
\cmidrule(lr){2-3} \cmidrule(lr){4-11}
 & \multirow{4}{*}{Tulu} & MMLU & $73.7_{\pm2.1}$ & $69.6_{\pm0.8}$ & $72.2_{\pm2.5}$ & $71.5_{\pm1.2}$ & $\boldsymbol{74.9_{\pm0.6}}$ & $72.6_{\pm2.5}$ & \underline{$74.7_{\pm1.8}$} & $73.9_{\pm2.8}$ \\
 &  & ARC & $88.7_{\pm2.4}$ & $86.3_{\pm2.3}$ & $85.5_{\pm4.4}$ & $85.8_{\pm3.0}$ & $87.0_{\pm3.7}$ & $87.5_{\pm3.8}$ & \underline{$90.7_{\pm0.2}$} & $\boldsymbol{90.8_{\pm0.3}}$ \\
 &  & GSM8K & $61.6_{\pm16.9}$ & $71.6_{\pm2.6}$ & $50.2_{\pm16.6}$ & $56.0_{\pm18.3}$ & $48.3_{\pm13.6}$ & $46.6_{\pm18.9}$ & $\boldsymbol{82.6_{\pm3.8}}$ & \underline{$75.7_{\pm7.0}$} \\
 &  & MATH & $47.7_{\pm4.3}$ & $46.3_{\pm1.2}$ & $\boldsymbol{51.8_{\pm0.7}}$ & $47.7_{\pm3.4}$ & $47.6_{\pm3.4}$ & \underline{$49.7_{\pm0.7}$} & $48.4_{\pm1.0}$ & $46.3_{\pm1.0}$ \\
\midrule
\multicolumn{3}{l}{\textit{Mean (36 settings)}} & 48.5 & 44.3 & 45.7 & 45.7 & 45.2 & 52.5 & $\boldsymbol{61.4}$ & \underline{60.4} \\
\bottomrule
\end{tabular}%
}
\vspace{-\baselineskip}
\end{table*}

Table~\ref{tab:main_results} reports 3-seed accuracy across
$3\text{ models}\times 3\text{ training sets}\times 4\text{ evaluation sets}=36$
settings. Overall, the HARP variants obtain the best result in
$\mathbf{28}$ out of $36$ cells ($78\%$). Averaged over all settings,
\textsc{HARP-C} and \textsc{HARP-E} reach $61.4$ and $60.4$,
respectively, substantially outperforming the strongest non-HARP
baseline, \textsc{SHED-O} ($52.5$), by $+8.9$ and $+7.9$ points, and
\textsc{Full FT} ($45.7$) by $+15.7$ and $+14.7$ points.

The gains are especially large on the noisier or more heterogeneous
training datasets. On SI, \textsc{HARP-C} achieves a mean accuracy of
$58.0$, compared with $36.0$ for \textsc{SHED-O} and at most $24.3$
for the remaining baselines. On Tulu, where many examples are
off-task for the target evaluations, \textsc{HARP-C} again performs
best, reaching $66.0$ versus $60.7$ for the next-best method.
\textsc{HARP-E} is the second-best method on both training sets
(SI $53.3$, Tulu $64.9$), still well ahead of every non-HARP
baseline. For example, on \textsc{Qwen3-8B}/SI/GSM8K,
\textsc{HARP-C} reaches $75.5$ versus $12.5$ for \textsc{SHED-O}.

The two HARP envelopes differ most clearly on the cleaner and more
on-task \textsc{Wiz} training dataset. Here, \textsc{HARP-E} obtains
the best mean accuracy ($63.0$), outperforming \textsc{Full-FT}
($61.2$) and \textsc{SHED-O} ($60.9$), while \textsc{HARP-C} is less
competitive because its conservative envelope selects substantially
less data. For example, on \textsc{Qwen3-8B}/Wiz/ARC, \textsc{HARP-E}
reaches $87.0$ versus $85.6$ for \textsc{SHED-W}.
\textbf{In general, \textsc{HARP-C} provides the strongest conservative pruning
behavior, while \textsc{HARP-E} better preserves useful coverage when
the training datasets are already clean and relevant.}

\begin{figure*}[t]
\centering
\includegraphics[width=\linewidth]{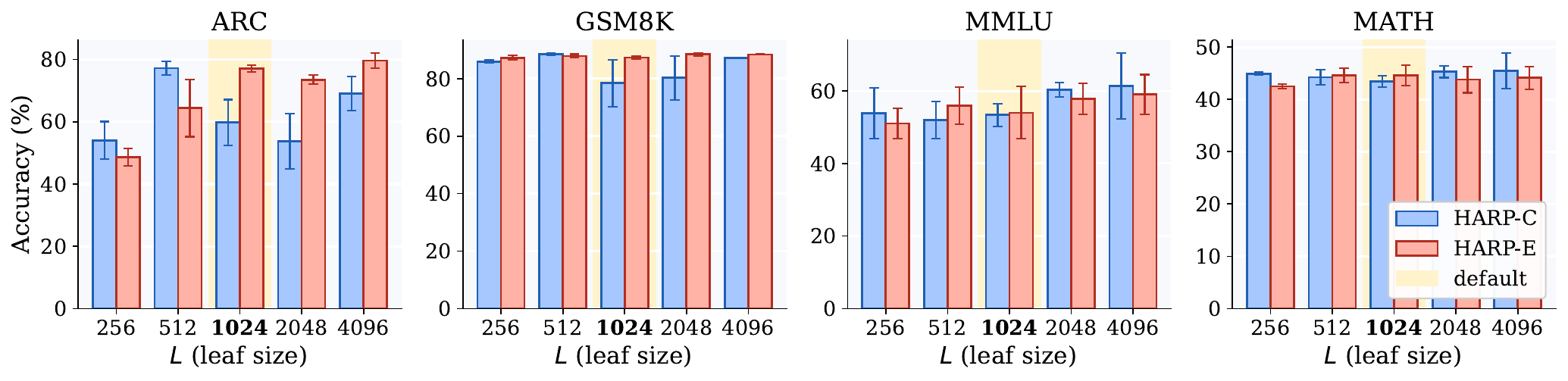}
\caption{Experiments on leaf size $L$.}
\label{fig:ablation_leaf}
\vspace{-0.3\baselineskip}
\end{figure*}

\begin{figure*}[ht]
\centering
\includegraphics[width=\linewidth]{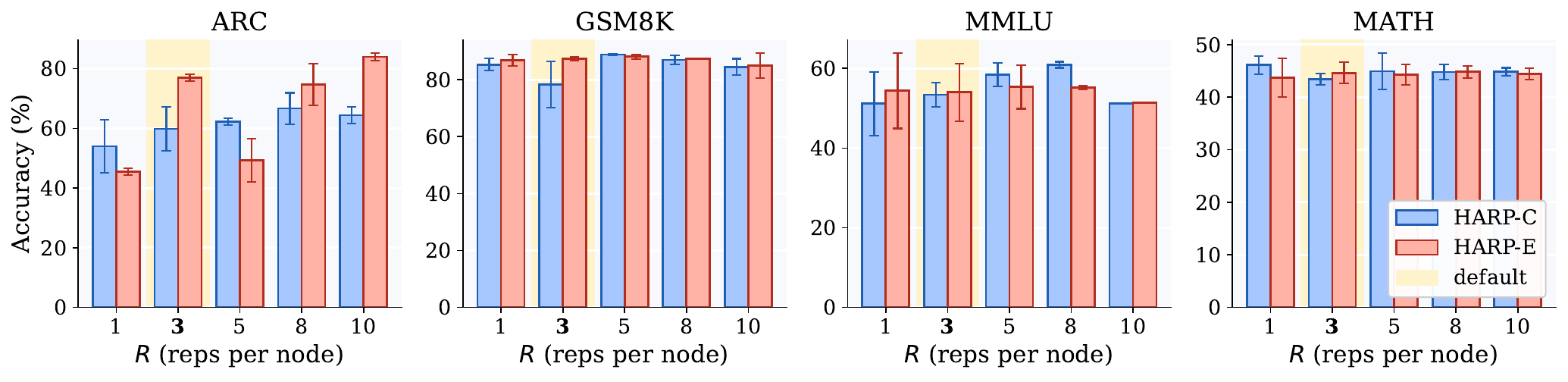}
\caption{Experiments on representatives per node $R$.}
\label{fig:ablation_rep}
\vspace{-\baselineskip}
\end{figure*}

\subsection{Data Efficiency}
\label{sec:exp_efficiency}

\begin{figure}[]
\centering
\includegraphics[width=\linewidth]{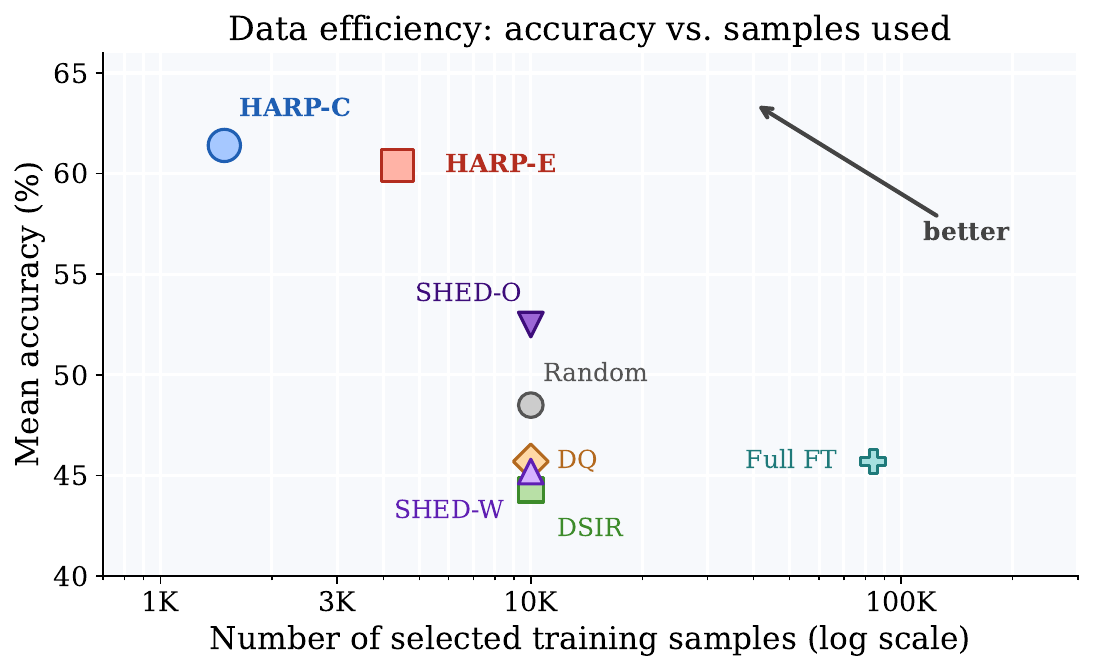}
\caption{Experiments on data efficiency.}
\label{fig:efficiency}
\vspace{-\baselineskip}
\end{figure}

Figure~\ref{fig:efficiency} plots each method as a single point on
the (samples, accuracy) plane. 
\textsc{HARP-C} sits in the
upper-left corner: it uses on average $\sim 7\times$ fewer training
examples than the $10$k-budget baselines (Random, DSIR, DQ,
SHED-W, SHED-O) and $\sim 56\times$ fewer than \textsc{Full FT},
while delivering $+8.9$ to $+17.1$ accuracy points over the
strongest non-HARP baseline. \textsc{HARP-E} sits between the two,
using $2$--$5\times$ more data than HARP-C but still fewer
samples than every selection baseline. 

\subsection{Ablation Studies}
\label{sec:exp_ablation}
In this section, we study the two main ablation factors, the effectiveness of leaf size $L$, and representatives per node $R$ (others in appendix~\ref{app:ablation_figures}).
We conduct the experiments using \textsc{Qwen3-4B-Base} on \textsc{WizardLM} training dataset for three times. 

\noindent\textbf{Leaf size $L$ (Figure~\ref{fig:ablation_leaf})} controls the maximum number of training examples assigned to each leaf. Smaller leaves provide finer-grained selections but create more leaves to estimate, while larger leaves produce
coarse selections, but can provide a more stable estimate.
We therefore sweep $L \in \{256, 512, 2048, 4096\}$ around the default
$L{=}1024$ to evaluate sensitivity to this partition granularity. On GSM8K and MATH, both \textsc{HARP-C} and \textsc{HARP-E} remain within roughly $2$ points of the default across the sweep. On ARC and MMLU, larger leaves modestly
improve \textsc{HARP-C}; for example, ARC increases from $54.0$ at
$L{=}256$ to $69.0$ at $L{=}4096$. The default value $L{=}1024$ lies
within the stable region, providing a practical balance between
the selection granularity and the train-evaluate cost.

\noindent\textbf{Representatives per node $R$ (Figure~\ref{fig:ablation_rep}).}
$R$ denotes the number of representative leaves sampled within each
node for main-effect estimation (Section~\ref{subsec:main_effect_estimation}).
Increasing $R$ reduces estimation noise by observing more leaves per node,
but also increases the train-evaluate cost. We sweep
$R \in \{1, 5, 8, 10\}$ around the default $R{=}3$. ARC is the only setting with a strong dependence on the number of
representatives: \textsc{HARP-E} improves by roughly $29$ points from
$R{=}1$ to $R{=}10$. On most tasks, however, the small differences between
$R{=}10$ and smaller representative budgets indicate that the empirical
Bayes estimates provide a reliable approximation without exhaustively
training every leaf. {Increasing $R$ also substantially increases FLOPs, as
shown in the Appendix.} In general, the default $R{=}3$ provides estimates
that are sufficient.

\section{Limitations}
\label{sec:limitations}
HARP requires an evaluation set whose distribution
reflects the downstream task. Second, our experiments cover only four reasoning benchmarks, coding and other open-ended generation tasks are left to future work.

\section{Conclusion}
\label{sec:conclusion}
We propse HARP, a hierarchical data selection
method with two complementary envelopes: \textsc{HARP-C} for
conservative pruning and \textsc{HARP-E} for broader on-task coverage. Across $36$ (model, training set, evaluation set) cells, HARP takes the per-row best on $28$ cells ($78\%$) and uses
$\sim 7\times$ fewer training examples than the $10$k-budget
baselines. \textsc{HARP-C} excels on the noisy SI and
heterogeneous Tulu sets and \textsc{HARP-E} on the clean
\textsc{Wiz} set, confirming that the two envelopes are complementary across data-quality regimes.

\bibliography{references}

\appendix
\section{Artifact Use Consistent With Intended Use}
\label{sec:artifact_use}

We use pretrained models, finetuning datasets, evaluation benchmarks, and baseline implementations described in Section~\ref{sec:experiments} only for research purposes. Our use is limited to controlled data-selection, finetuning, and benchmark evaluation experiments, consistent with the intended research use of these artifacts. Artifacts produced by this work, including selected subsets, proxy evaluation sets, hierarchy partitions, utility estimates, and code, are intended for research on efficient finetuning data selection, and any derived artifacts should be used or redistributed only when permitted by the licenses and access terms of the original datasets, models, and benchmarks.

\section{AI Assistants In Research Or Writing}

We used AI assistants during the preparation of this work for limited coding and writing support. In particular, we use Claude Code for parts of the implementation, including code drafting, debugging, and refactoring. We also used GPT/OpenAI tools to improve the clarity, grammar, and presentation of the manuscript. All research ideas, method design, experimental decisions, result analysis, and final manuscript content were reviewed and verified by the authors. The authors take full responsibility for the correctness, originality, and integrity of the submitted work.

\section{Proof}
\subsection{Universal Notation Table}
\label{app:notation_table}

Table~\ref{tab:universal_notation} summarizes the notation used in the method and appendix. The same symbols are used throughout unless a local proof explicitly introduces a temporary dummy variable.

\onecolumn
\begin{longtable}{p{0.26\linewidth}p{0.68\linewidth}}
\caption{Universal notation table for HARP.}
\label{tab:universal_notation}\\
\hline
\textbf{Symbol} & \textbf{Meaning}\\
\hline
\endfirsthead
\hline
\textbf{Symbol} & \textbf{Meaning}\\
\hline
\endhead
\hline
\endfoot
\multicolumn{2}{l}{\textit{Data, evaluation, and proxy construction}}\\
\(\mathcal{D}=\{x_i\}_{i=1}^N\) & Candidate training dataset; \(x_i\) is one training example and \(N\) is the number of candidates.\\
\(\mathcal{E}\), \(\mathcal{E}_d\), \(D\) & Full evaluation set and its domain-\(d\) subset, and the number of evaluation domains. The index \(d\in[D]\) ranges over evaluation domains.\\
\(\widetilde{\mathcal{E}}\), \(\widetilde{\mathcal{E}}_d\) & Proxy evaluation set and its domain-\(d\) subset.\\
\(n_d\), \(k_d\), \(k_{\mathrm{dom}}\), \(K_{\mathrm{proxy}}\), \(\rho\), \(\rho_{\mathrm{eff}}\), \(k_{\mathrm{boot}}\) & Full domain size, proxy domain size, raw-domain size floor, minimum desired total proxy size, and proxy sampling ratio, effective capped proxy sampling ratio, and bootstrap bucket size threshold.\\
\(z(e)\), \(\hat z_i\), \(\bar z_g\) & Evaluation-example embedding, normalized training-example embedding, and mean embedding of leaf \(g\).\\
\(A(S)\), \(c(S)\), \(c_g\) & Training subset induced by selected leaves, total selected leaf cost, and individual leaf cost: \(A(S)=\bigcup_{g\in S}\mathcal{L}_g\), \(c(S)=\sum_{g\in S}|\mathcal{L}_g|\), and \(c_g=|\mathcal{L}_g|\).\\
\(B\), \(K\) & Training-data budget and an upper bound on the number of leaves in any feasible set.\\
\(C_{\min}\), \(C_{\max}\) & Minimum and maximum leaf-size constraints.\\
\(U(S)\), \(u_d(S)\), \(v_0(d)\), \(b_d\) & Task-averaged utility, domain utility, base-model domain utility, and shorthand \(b_d=v_0(d)\).\\
\(\mathcal{L}_g\), \(G\), \(p(g)\), \(\mathcal{P}_p\) & Leaf group, number of leaves, parent node of leaf \(g\), and leaves under parent \(p\).\\
\(\mathcal{R}_p\), \(r_p\), \(R\) & Representative leaves in parent \(p\), their count, and total representative count.\\
\(\phi_g(d)\), \(y_r(d)\), \(\widetilde y_g(d)\), \(\widehat\phi_g(d)\) & True main effect, measured representative effect, interpolated estimate, and final estimated main effect.\\
\(\alpha_{g,r}\), \(n_{\mathrm{eff}}(g)\), \(\Delta_{g,p}\) & Interpolation weight, effective representative count, and weighted representative distance.\\
\(\mu_0(d)\), \(\tau^2(d)\), \(\sigma_p^2(d)\), \(\rho_g(d)\) & Global measured-effect mean, across-node variance, within-node variance, and empirical-Bayes shrinkage weight.\\
\(\mathcal{A}_{\mathrm{dom}}\), \(w_d\), \(\epsilon_{\mathrm{dom}}\) & Active domains, active-domain weights, and active-domain threshold.\\
\(\mathcal{F}_k\) & Feasible candidate set used by greedy selection at step \(k\): \(\mathcal{F}_k=\{g\notin S_k:c(S_k)+c_g\le B\}\).\\
\(y_d^{\mathrm{C}}(S)\), \(\widehat y_d^{\mathrm{C}}(S)\) & True and estimated clipped conservative domain utilities.\\
\(U_{\mathrm{C}}(S)\), \(\widehat U_{\mathrm{C}}(S)\) & True and estimated conservative first-order utilities.\\
\(y_d^{\mathrm{E}}(S)\), \(\widehat y_d^{\mathrm{E}}(S)\) & True and estimated clipped expansive additive domain utilities.\\
\(U_{\mathrm{E}}(S)\), \(\widehat U_{\mathrm{E}}(S)\) & True and estimated expansive additive first-order utilities.\\
\(q_d^{\mathrm{C}}(S)\), \(q_d^{\mathrm{E}}(S)\) & Raw unclipped conservative and expansive domain envelopes.\\
\(\epsilon_{\mathrm{C}}\), \(\delta_{\mathrm{C}}\), \(\epsilon_{\mathrm{E}}\), \(\delta_{\mathrm{E}}\), \(\epsilon_{\mathrm{opt}}\) & First-order surrogate errors, estimation errors, and optimization gap for HARP-C and HARP-E.\\
\end{longtable}
\twocolumn

\subsection{Stability of Domain-Aware Proxy Evaluation}
\label{app:proxy_stability}

This lemma analyzes the fixed proxy evaluation set constructed in Section~\ref{subsec:proxy_eval}. It shows that the gap between full-domain utility and proxy-domain utility is controlled by two quantities induced by the proxy geometry: how well proxy examples cover the full domain, and how balanced their represented regions are. The corrected proxy sampling rate in the method satisfies \(\rho_{\mathrm{eff}}=\min\{1,\max(\rho,K_{\mathrm{proxy}}/|\mathcal{E}|)\}\), so each retained domain has \(0<k_d\le |\mathcal{E}_d|\).

\begin{lemma}[Coverage stability of domain-aware proxy evaluation]
\label{lem:proxy_stability}
Fix an evaluation domain \(d\) after the domain construction step, with full evaluation set \(\mathcal{E}_d=\{e_i\}_{i=1}^{n_d}\) and proxy set \(\widetilde{\mathcal{E}}_d=\{\widetilde e_j\}_{j=1}^{k_d}\). Let \(z(e)\in\mathbb{R}^p\) denote the embedding of evaluation example \(e\). For a selected leaf set \(S\), let \(A(S)=\bigcup_{g\in S}\mathcal{L}_g\), and let \(f_S\) be the model fine-tuned on \(A(S)\), and define the normalized per-example evaluation value as \(s_S(e)=m(f_S(e),y_e)\in[0,1]\), where \(m\) is the evaluation metric. Define the full-domain and proxy-domain utilities as
\begin{equation*}
\begin{aligned}
u_d(S;\mathcal{E}_d)
&=
\frac{1}{n_d}\sum_{e\in\mathcal{E}_d}s_S(e),\\[-0.15em]
u_d(S;\widetilde{\mathcal{E}}_d)
&=
\frac{1}{k_d}\sum_{\widetilde e\in\widetilde{\mathcal{E}}_d}s_S(\widetilde e).
\end{aligned}
\end{equation*}
For each proxy example \(\widetilde e_j\), define its Voronoi cell over the full domain as
\begin{equation*}
C_j
=
\Bigl\{
e\in\mathcal{E}_d:
j=\operatorname*{arg\,min}_{\ell\in[k_d]}
\|z(e)-z(\widetilde e_\ell)\|_2
\Bigr\},
\end{equation*}
with arbitrary tie-breaking, and let \(\omega_j=|C_j|/n_d\) be the fraction of full-domain examples represented by proxy point \(\widetilde e_j\). Define the within-cell score oscillation for \(S\) as
\begin{equation*}
\varepsilon_d(S)
=
\max_{j\in[k_d]}
\max_{e\in C_j}
\bigl|
s_S(e)-s_S(\widetilde e_j)
\bigr|.
\end{equation*}
Then, for any selected leaf set \(S\),
\begin{equation*}
\left|
u_d(S;\mathcal{E}_d)
-
u_d(S;\widetilde{\mathcal{E}}_d)
\right|
\le
\varepsilon_d(S)
+
\frac{1}{2}
\left\|
\omega-\frac{1}{k_d}\mathbf{1}
\right\|_1 .
\end{equation*}
Moreover, if \(s_S\) is \(L_d\)-Lipschitz over evaluation embeddings and
\begin{equation*}
r_d
=
\max_{e\in\mathcal{E}_d}
\min_{\widetilde e\in\widetilde{\mathcal{E}}_d}
\|z(e)-z(\widetilde e)\|_2
\end{equation*}
is the proxy covering radius, then \(\varepsilon_d(S)\le L_d r_d\), and therefore
\begin{equation*}
\left|
u_d(S;\mathcal{E}_d)
-
u_d(S;\widetilde{\mathcal{E}}_d)
\right|
\le
L_d r_d
+
\frac{1}{2}
\left\|
\omega-\frac{1}{k_d}\mathbf{1}
\right\|_1 .
\end{equation*}
\end{lemma}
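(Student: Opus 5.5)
The plan is to rewrite the full-domain average cell by cell and split it into a within-cell oscillation term and a reweighting term. Because every \(e\in\mathcal{E}_d\) is assigned to exactly one index \(j\) (ties are broken arbitrarily but deterministically), the cells \(C_1,\dots,C_{k_d}\) partition \(\mathcal{E}_d\). Hence \(\sum_j\omega_j=1\); some cells may be empty, with \(\omega_j=0\), and nothing below uses nonemptiness. Writing \(\widetilde s_j=s_S(\widetilde e_j)\), I will use the identity
\begin{equation*}
u_d(S;\mathcal{E}_d)-u_d(S;\widetilde{\mathcal{E}}_d)
=\sum_{j=1}^{k_d}\frac{1}{n_d}\sum_{e\in C_j}\bigl(s_S(e)-\widetilde s_j\bigr)
+\sum_{j=1}^{k_d}\Bigl(\omega_j-\frac{1}{k_d}\Bigr)\widetilde s_j .
\end{equation*}
It follows by adding and subtracting \(\sum_j\omega_j\widetilde s_j=\frac{1}{n_d}\sum_j|C_j|\,\widetilde s_j\).

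For the first term, each summand satisfies \(|s_S(e)-\widetilde s_j|\le\varepsilon_d(S)\) by definition of the oscillation. Its absolute value is therefore at most \(\sum_j(|C_j|/n_d)\,\varepsilon_d(S)=\varepsilon_d(S)\).

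The second term is the step I expect to need the one real idea. The naive bound \(\|\omega-\frac{1}{k_d}\mathbf{1}\|_1\max_j\widetilde s_j\) loses the factor \(1/2\). To recover it, I use \(\sum_j(\omega_j-1/k_d)=1-1=0\) to subtract the constant \(1/2\) from every \(\widetilde s_j\) for free:
\begin{equation*}
\sum_j\Bigl(\omega_j-\frac{1}{k_d}\Bigr)\widetilde s_j=\sum_j\Bigl(\omega_j-\frac{1}{k_d}\Bigr)\Bigl(\widetilde s_j-\frac12\Bigr).
\end{equation*}
Since \(s_S\in[0,1]\), we have \(|\widetilde s_j-\tfrac12|\le\tfrac12\), so this term is bounded by \(\frac12\|\omega-\frac{1}{k_d}\mathbf{1}\|_1\). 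The triangle inequality applied to the two terms gives the first claimed inequality.

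For the Lipschitz refinement, take \(e\in C_j\). By construction of the Voronoi cell, \(\widetilde e_j\) attains \(\min_{\ell}\|z(e)-z(\widetilde e_\ell)\|_2\), so \(\|z(e)-z(\widetilde e_j)\|_2\le r_d\) by definition of the covering radius. Lipschitzness then gives \(|s_S(e)-s_S(\widetilde e_j)|\le L_d r_d\). Taking the maximum over \(j\) and \(e\in C_j\) yields \(\varepsilon_d(S)\le L_d r_d\), and substituting this into the first bound gives the second displayed inequality. No earlier result of the paper is needed; the only care points are that the partition property justifies \(\sum_j\omega_j=1\), and the centering trick above.
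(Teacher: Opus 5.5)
Your proposal is correct and follows the paper's proof essentially step for step. Like the paper, you add and subtract the cell-weighted proxy average $\sum_j\omega_j s_S(\widetilde e_j)$, bound the within-cell term by $\varepsilon_d(S)$, bound the reweighting term using $\sum_j(\omega_j-1/k_d)=0$ and $s_S\in[0,1]$, and finish with the covering radius plus Lipschitzness. The only cosmetic difference is how you get the factor $\tfrac12$: you center the scores at $\tfrac12$, whereas the paper bounds the sum by the positive part $\sum_{a_j>0}a_j=\tfrac12\|a\|_1$ with $a_j=\omega_j-1/k_d$; the two arguments are equivalent.
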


\begin{proof}
Fix an evaluation domain \(d\) and a selected leaf set \(S\). For each full evaluation example \(e_i\in\mathcal{E}_d\), let
\begin{equation*}
\pi(i)
=
\operatorname*{arg\,min}_{j\in[k_d]}
\|z(e_i)-z(\widetilde e_j)\|_2
\end{equation*}
be the index of its nearest proxy example. By construction, \(e_i\in C_{\pi(i)}\). Using the definition of \(\varepsilon_d(S)\), we have
\begin{equation*}
\bigl|
s_S(e_i)-s_S(\widetilde e_{\pi(i)})
\bigr|
\le
\varepsilon_d(S).
\end{equation*}
Averaging over all examples in \(\mathcal{E}_d\) gives
\begin{equation*}
\left|
\frac{1}{n_d}\sum_{i=1}^{n_d}s_S(e_i)
-
\frac{1}{n_d}\sum_{i=1}^{n_d}s_S(\widetilde e_{\pi(i)})
\right|
\le
\varepsilon_d(S).
\end{equation*}
The first term is \(u_d(S;\mathcal{E}_d)\). The second term can be regrouped by proxy Voronoi cells:
\begin{equation*}
\begin{aligned}
\frac{1}{n_d}
\sum_{i=1}^{n_d}s_S(\widetilde e_{\pi(i)})
&=
\sum_{j=1}^{k_d}
\frac{|C_j|}{n_d}
s_S(\widetilde e_j)\\[-0.15em]
&=
\sum_{j=1}^{k_d}
\omega_j s_S(\widetilde e_j).
\end{aligned}
\end{equation*}
Thus,
\begin{equation*}
\left|
u_d(S;\mathcal{E}_d)
-
\sum_{j=1}^{k_d}
\omega_j s_S(\widetilde e_j)
\right|
\le
\varepsilon_d(S).
\end{equation*}
It remains to compare the cell-weighted proxy average with the unweighted proxy average:
\begin{equation*}
u_d(S;\widetilde{\mathcal{E}}_d)
=
\frac{1}{k_d}\sum_{j=1}^{k_d}s_S(\widetilde e_j).
\end{equation*}
By the triangle inequality,
\begin{equation*}
\begin{aligned}
&\left|
u_d(S;\mathcal{E}_d)
-
u_d(S;\widetilde{\mathcal{E}}_d)
\right| \\
&\le
\varepsilon_d(S)
+
\left|
\sum_{j=1}^{k_d}
\left(\omega_j-\frac{1}{k_d}\right)
s_S(\widetilde e_j)
\right|.
\end{aligned}
\end{equation*}
Let \(a_j=\omega_j-\frac{1}{k_d}\). Since both \(\omega\) and \(\frac{1}{k_d}\mathbf{1}\) are probability vectors, \(\sum_{j=1}^{k_d}a_j=0\). Because \(s_S(\widetilde e_j)\in[0,1]\),
\begin{equation*}
\begin{aligned}
\left|
\sum_{j=1}^{k_d}a_j s_S(\widetilde e_j)
\right|
&\le
\sum_{j:a_j>0}a_j\\[-0.15em]
&=
\frac{1}{2}
\sum_{j=1}^{k_d}|a_j|\\[-0.15em]
&=
\frac{1}{2}
\left\|
\omega-\frac{1}{k_d}\mathbf{1}
\right\|_1 .
\end{aligned}
\end{equation*}
Combining the within-cell oscillation term and the cell-imbalance term gives
\begin{equation*}
\left|
u_d(S;\mathcal{E}_d)
-
u_d(S;\widetilde{\mathcal{E}}_d)
\right|
\le
\varepsilon_d(S)
+
\frac{1}{2}
\left\|
\omega-\frac{1}{k_d}\mathbf{1}
\right\|_1 .
\end{equation*}

If \(s_S\) is \(L_d\)-Lipschitz over embeddings, then for every \(e\in C_j\),
\begin{equation*}
\bigl|
s_S(e)-s_S(\widetilde e_j)
\bigr|
\le
L_d
\|z(e)-z(\widetilde e_j)\|_2
\le
L_d r_d .
\end{equation*}
Therefore \(\varepsilon_d(S)\le L_d r_d\), which yields the covering-radius version of the bound. This completes the proof.
\end{proof}

\paragraph{Interpretation.}
The first term captures local representativeness: proxy examples are reliable when nearby full-domain examples have similar evaluation values. The second term captures the cost of using an unweighted proxy average: it is small when the proxy cells represent similar fractions of the full domain. If one instead used the cell-weighted proxy utility \(\sum_{j=1}^{k_d}\omega_j s_S(\widetilde e_j)\), the imbalance term would disappear, leaving only the within-cell oscillation term.

\subsection{Iteration-Reduction Bound for Leaf-Level Estimation}
\label{app:hierarchy_cost}

The following lemma isolates the reduction obtained by moving from individual examples to leaf groups. It bounds the number of the utility queries that would be required if every final leaf were measured once; the representative-leaf estimation step in Section~\ref{subsec:main_effect_estimation} further reduces this cost in practice. The upper leaf-size parameter \(C_{\max}\) controls recursive splitting, while the lower bound \(C_{\min}\) controls the counting argument below.

\begin{lemma}[Leaf grouping reduces the utility-query count]
\label{lem:hierarchy_cost}
Let \(\{\mathcal{L}_g\}_{g=1}^{G}\) be the final leaf groups constructed by HARP. Suppose that the leaves form a partition of the candidate pool \(\mathcal{D}\), where \(|\mathcal{D}|=N\), and that each final leaf satisfies the minimum-size constraint \(|\mathcal{L}_g|\ge C_{\min}\). Then
\begin{equation*}
G\le \frac{N}{C_{\min}}.
\end{equation*}
Therefore, exhaustive utility evaluation at the leaf level requires at most \(N/C_{\min}\) train--evaluate iterations, compared with \(N\) iterations for exhaustive individual-example singleton estimation.
\end{lemma}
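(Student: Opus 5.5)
The argument is a counting argument based on the partition structure. Because the final leaves \(\{\mathcal{L}_g\}_{g=1}^{G}\) partition \(\mathcal{D}\), they are pairwise disjoint and their union is all of \(\mathcal{D}\). Cardinality is additive over disjoint unions, so
\begin{equation*}
N=|\mathcal{D}|=\sum_{g=1}^{G}|\mathcal{L}_g| .
\end{equation*}
This identity is the only structural fact we need. It holds regardless of how the anchor partitioning, recursive splitting at \(C_{\max}\), and merging steps of Section~\ref{subsec:hierarchy} produced the leaves, because splitting and merging preserve the partition property. We will note this explicitly so the hypothesis is clearly satisfied by HARP's construction.

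Next we apply the minimum-size hypothesis \(|\mathcal{L}_g|\ge C_{\min}\) to each term of the sum. This gives \(N\ge G\,C_{\min}\). Dividing by \(C_{\min}>0\) yields \(G\le N/C_{\min}\). We will state positivity of \(C_{\min}\) explicitly; this is implicit, since \(C_{\min}\) is a minimum leaf size and leaves are nonempty.

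For the query-count consequence, we model exhaustive leaf-level estimation as one train--evaluate iteration per leaf. This means fine-tuning on \(\mathcal{L}_g\) and evaluating on \(\widetilde{\mathcal{E}}\) to obtain \(u_d(\mathcal{L}_g)\) for all \(d\) at once. The total is then \(G\le N/C_{\min}\) iterations. Exhaustive singleton estimation, by contrast, needs one iteration per example, so \(N\) iterations. The ratio of the two costs is at least \(C_{\min}\).

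There is no real obstacle here. The only point needing care is making explicit that a single train--evaluate iteration yields all domain utilities simultaneously, so that the query count is \(G\) rather than \(G\cdot D\). We will also remark that the representative-based estimation of Section~\ref{subsec:main_effect_estimation} measures only \(\sum_p|\mathcal{R}_p|\le G\) leaves, so the bound is an upper bound on HARP's actual cost.
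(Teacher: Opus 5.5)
Your proposal is correct and follows the paper's proof: the partition gives \(N=\sum_{g=1}^{G}|\mathcal{L}_g|\), and the minimum-size bound gives \(N\ge G\,C_{\min}\), hence \(G\le N/C_{\min}\). Your extra remarks are sound but not needed here; the paper places the point that one fine-tuning run yields all domain utilities at once in Lemma~\ref{lem:main_effect_query_reduction} rather than in this proof.
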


\begin{proof}
Because the final leaf groups form a partition of \(\mathcal{D}\), each training example belongs to exactly one leaf group, and therefore
\begin{equation*}
N=\sum_{g=1}^{G}|\mathcal{L}_g|.
\end{equation*}
By the minimum-size constraint, \(|\mathcal{L}_g|\ge C_{\min}\) for every \(g\). Hence
\begin{equation*}
N
=
\sum_{g=1}^{G}|\mathcal{L}_g|
\ge
\sum_{g=1}^{G}C_{\min}
=
G C_{\min}.
\end{equation*}
Rearranging gives \(G\le N/C_{\min}\). Thus, replacing individual examples with leaf groups reduces the number of possible singleton utility measurements from \(N\) to at most \(N/C_{\min}\) before applying any additional representative-leaf subsampling.
\end{proof}

\subsection{Representative Interpolation Error}
\label{app:rep_interpolation}

For the interpolation weights used in Section~\ref{subsec:main_effect_estimation}, assume \(\mathcal{R}_p\neq\varnothing\) for every nonempty parent node \(p\). Define
\begin{equation*}
\nu(\bar z)
=
\begin{cases}
\bar z/\|\bar z\|_2, & \|\bar z\|_2>0,\\
0, & \|\bar z\|_2=0,
\end{cases}
\end{equation*}
and, for \(\lambda>0\),
\begin{equation*}
\kappa(\bar z_g,\bar z_r)
=
\exp\left(\frac{\nu(\bar z_g)^\top\nu(\bar z_r)}{\lambda}\right).
\end{equation*}
Then
\begin{equation*}
\alpha_{g,r}
=
\frac{\kappa(\bar z_g,\bar z_r)}{\sum_{r'\in\mathcal{R}_{p(g)}}\kappa(\bar z_g,\bar z_{r'})}
\end{equation*}
is well-defined, nonnegative, and satisfies \(\sum_{r\in\mathcal{R}_{p(g)}}\alpha_{g,r}=1\).

\begin{lemma}[Representative interpolation under local smoothness]
\label{lem:rep_interpolation}
Fix an nonempty parent node \(p\), an evaluation domain \(d\), and a leaf \(g\in\mathcal{P}_p\), with \(\mathcal{R}_p\neq\varnothing\). Suppose the true domainwise main effect is locally smooth over leaf embeddings inside parent \(p\): for any leaves \(g,h\in\mathcal{P}_p\),
\begin{equation*}
|\phi_g(d)-\phi_h(d)|
\le
L_{p,d}\|\bar z_g-\bar z_h\|_2 .
\end{equation*}
For each measured representative \(r\in\mathcal{R}_p\), suppose
\begin{equation*}
\begin{aligned}
y_r(d)&=\phi_r(d)+\varepsilon_r(d),
\\
\mathbb{E}[\varepsilon_r(d)]&=0,
\\
\mathrm{Var}(\varepsilon_r(d))&\le \sigma_p^2(d),
\end{aligned}
\end{equation*}
and suppose the noises are independent across representatives. For an unmeasured leaf \(g\), define the local representative interpolation estimate by
\begin{equation*}
\begin{aligned}
\widetilde y_g(d)
&=
\sum_{r\in\mathcal{R}_p}\alpha_{g,r}y_r(d),
\\
\sum_{r\in\mathcal{R}_p}\alpha_{g,r}&=1,
\\
\alpha_{g,r}&\ge 0 .
\end{aligned}
\end{equation*}
Let
\begin{equation*}
\begin{aligned}
\Delta_{g,p}
&=
\sum_{r\in\mathcal{R}_p}
\alpha_{g,r}\|\bar z_g-\bar z_r\|_2,
\\
n_{\mathrm{eff}}(g)
&=
\frac{1}{\sum_{r\in\mathcal{R}_p}\alpha_{g,r}^2}.
\end{aligned}
\end{equation*}
Then the interpolation bias and variance satisfy
\begin{equation*}
\left|
\mathbb{E}[\widetilde y_g(d)]-\phi_g(d)
\right|
\le
L_{p,d}\Delta_{g,p},
\end{equation*}
and
\begin{equation*}
\mathrm{Var}(\widetilde y_g(d))
\le
\frac{\sigma_p^2(d)}{n_{\mathrm{eff}}(g)}.
\end{equation*}
Consequently,
\begin{equation*}
\begin{aligned}
\mathbb{E}\left[(\widetilde y_g(d)-\phi_g(d))^2\right]
\le
L_{p,d}^2\Delta_{g,p}^2
+
\frac{\sigma_p^2(d)}{n_{\mathrm{eff}}(g)}.
\end{aligned}
\end{equation*}
For a directly measured representative leaf, HARP sets \(\widehat\phi_g(d)=y_g(d)\) instead of using interpolation; equivalently, one may take \(\alpha_{g,g}=1\), which makes \(\Delta_{g,p}=0\).
\end{lemma}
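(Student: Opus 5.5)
The plan is a standard bias--variance decomposition, using only linearity of expectation, the convexity of the weights \(\alpha_{g,r}\), and the independence of the representative noises. Throughout I fix \(p\), \(d\), and the unmeasured leaf \(g\in\mathcal{P}_p\). All randomness sits in the \(\varepsilon_r(d)\), so the weights \(\alpha_{g,r}\), the embeddings, and the true effects \(\phi_r(d)\) are treated as deterministic. As noted after the statement, the measured-leaf case is the degenerate choice \(\alpha_{g,g}=1\), so only the interpolation case needs an argument.

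\textbf{Bias.} By linearity and \(\mathbb{E}[\varepsilon_r(d)]=0\), we get \(\mathbb{E}[\widetilde y_g(d)]=\sum_{r\in\mathcal{R}_p}\alpha_{g,r}\phi_r(d)\). Since \(\sum_r\alpha_{g,r}=1\), I write \(\phi_g(d)=\sum_r\alpha_{g,r}\phi_g(d)\), which gives
\[
\mathbb{E}[\widetilde y_g(d)]-\phi_g(d)=\sum_{r\in\mathcal{R}_p}\alpha_{g,r}\bigl(\phi_r(d)-\phi_g(d)\bigr).
\]
Each representative \(r\) lies in \(\mathcal{P}_p\), so the local smoothness assumption applies to the pair \((g,r)\). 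Because \(\alpha_{g,r}\ge 0\), the triangle inequality then bounds the absolute value by \(L_{p,d}\sum_r\alpha_{g,r}\|\bar z_g-\bar z_r\|_2=L_{p,d}\Delta_{g,p}\). The one point to check is that the representatives are leaves of the same parent, which holds because \(\mathcal{R}_p\subseteq\mathcal{P}_p\).

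\textbf{Variance.} The deterministic part \(\sum_r\alpha_{g,r}\phi_r(d)\) contributes no variance, so \(\mathrm{Var}(\widetilde y_g(d))=\mathrm{Var}\bigl(\sum_r\alpha_{g,r}\varepsilon_r(d)\bigr)\). Independence makes this equal to \(\sum_r\alpha_{g,r}^2\mathrm{Var}(\varepsilon_r(d))\). This is at most \(\sigma_p^2(d)\sum_r\alpha_{g,r}^2=\sigma_p^2(d)/n_{\mathrm{eff}}(g)\), directly from the definition of \(n_{\mathrm{eff}}\). The definition is well posed because the weights sum to one, so \(\sum_r\alpha_{g,r}^2>0\).

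\textbf{MSE.} I use the standard identity \(\mathbb{E}[(X-c)^2]=(\mathbb{E}X-c)^2+\mathrm{Var}(X)\) with \(X=\widetilde y_g(d)\) and the constant \(c=\phi_g(d)\). Substituting the bias and variance bounds gives the stated inequality. Nothing here is a real obstacle; the only delicate point is treating the weights and the true effects as nonrandom, so that expectation and variance act only through the noise.
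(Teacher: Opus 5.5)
Your proposal is correct and follows the paper's proof essentially step for step. The paper also gets the bias from linearity, zero-mean noise, convexity of the weights and local smoothness on pairs $(g,r)$ with $\mathcal{R}_p\subseteq\mathcal{P}_p$; the variance from independence; the MSE from the squared-bias-plus-variance identity; and the measured case as the degenerate choice $\alpha_{g,g}=1$. Your extra remarks, that the weights are nonrandom and that $\sum_r\alpha_{g,r}^2>0$ makes $n_{\mathrm{eff}}(g)$ well defined, are small clarifications the paper leaves implicit.
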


\begin{proof}
Using the definition of \(\widetilde y_g(d)\) and the zero-mean noise assumption,
\begin{equation*}
\mathbb{E}[\widetilde y_g(d)]
=
\sum_{r\in\mathcal{R}_p}\alpha_{g,r}\phi_r(d).
\end{equation*}
Since the weights are nonnegative and sum to one,
\begin{equation*}
\begin{aligned}
&\left|
\mathbb{E}[\widetilde y_g(d)]-\phi_g(d)
\right|
\\
&\qquad=
\left|
\sum_{r\in\mathcal{R}_p}
\alpha_{g,r}(\phi_r(d)-\phi_g(d))
\right|
\\
&\qquad\le
\sum_{r\in\mathcal{R}_p}
\alpha_{g,r}|\phi_r(d)-\phi_g(d)|.
\end{aligned}
\end{equation*}
By local smoothness,
\begin{equation*}
|\phi_r(d)-\phi_g(d)|
\le
L_{p,d}\|\bar z_g-\bar z_r\|_2.
\end{equation*}
Combining the two inequalities gives
\begin{equation*}
\begin{aligned}
&\left|
\mathbb{E}[\widetilde y_g(d)]-\phi_g(d)
\right|
\\
&\qquad\le
L_{p,d}
\sum_{r\in\mathcal{R}_p}
\alpha_{g,r}\|\bar z_g-\bar z_r\|_2
\\
&\qquad=
L_{p,d}\Delta_{g,p}.
\end{aligned}
\end{equation*}

For the variance, independence gives
\begin{equation*}
\begin{aligned}
\mathrm{Var}(\widetilde y_g(d))
&=
\mathrm{Var}\left(
\sum_{r\in\mathcal{R}_p}
\alpha_{g,r}\varepsilon_r(d)
\right)
\\
&=
\sum_{r\in\mathcal{R}_p}
\alpha_{g,r}^2\mathrm{Var}(\varepsilon_r(d)).
\end{aligned}
\end{equation*}
Since \(\mathrm{Var}(\varepsilon_r(d))\le \sigma_p^2(d)\),
\begin{equation*}
\begin{aligned}
\mathrm{Var}(\widetilde y_g(d))
&\le
\sigma_p^2(d)
\sum_{r\in\mathcal{R}_p}\alpha_{g,r}^2
\\
&=
\frac{\sigma_p^2(d)}{n_{\mathrm{eff}}(g)}.
\end{aligned}
\end{equation*}
Finally, the mean-squared interpolation error decomposes into squared bias plus variance:
\begin{equation*}
\begin{aligned}
&\mathbb{E}\left[(\widetilde y_g(d)-\phi_g(d))^2\right]
\\
&=
\left(\mathbb{E}[\widetilde y_g(d)]-\phi_g(d)\right)^2
+
\mathrm{Var}(\widetilde y_g(d)).
\end{aligned}
\end{equation*}
Substituting the bias and variance bounds proves the stated MSE bound. The directly measured case follows by setting all interpolation mass on \(g\), which eliminates the smoothness-bias term.
\end{proof}

\subsection{Empirical-Bayes Shrinkage}
\label{app:eb_shrinkage}

\begin{proposition}[Normal-normal shrinkage and risk reduction]
\label{prop:eb_shrinkage}
Fix an unmeasured leaf \(g\) and domain \(d\). Suppose the local representative estimate satisfies
\begin{equation*}
\widetilde y_g(d)\mid \theta_g(d)
\sim
\mathcal{N}(\theta_g(d),V_g(d)),
\end{equation*}
where \(\theta_g(d)=\mathbb{E}[\widetilde y_g(d)]\) denotes the local interpolation target rather than necessarily the true main effect \(\phi_g(d)\). By Lemma~\ref{lem:rep_interpolation}, this target satisfies \(|\theta_g(d)-\phi_g(d)|\le L_{p(g),d}\Delta_{g,p(g)}\).
where \(V_g(d)=\sigma_{p(g)}^2(d)/n_{\mathrm{eff}}(g)\) and \(V_g(d)>0\), and suppose the latent local interpolation target has prior
\begin{equation*}
\theta_g(d)\sim \mathcal{N}(\mu_0(d),\tau^2(d))\,\quad \tau^2(d)>0.
\end{equation*}
Then the posterior mean is
\begin{equation*}
\begin{aligned}
\widehat{\phi}_g(d)
&=
\rho_g(d)\widetilde y_g(d)
+
(1-\rho_g(d))\mu_0(d),
\\
\rho_g(d)
&=
\frac{\tau^2(d)}{\tau^2(d)+V_g(d)}.
\end{aligned}
\end{equation*}
Moreover, under squared error loss, this posterior mean is the Bayes estimator, and its integrated mean-squared error is
\begin{equation*}
\begin{aligned}
&\mathbb{E}\left[(\widehat{\phi}_g(d)-\theta_g(d))^2\right]
\\
&
=
\frac{\tau^2(d)V_g(d)}{\tau^2(d)+V_g(d)}
\le
V_g(d)
\\
&
=
\mathbb{E}\left[(\widetilde y_g(d)-\theta_g(d))^2\right].
\end{aligned}
\end{equation*}
Consequently, when the later theory uses \(\widehat{\phi}_g(d)\) as an estimator of the true main effect \(\phi_g(d)\), the error bound must include both shrinkage risk and interpolation bias:

\begin{equation*}
\begin{aligned}
&\mathbb{E}\left[(\widehat{\phi}_g(d)-\phi_g(d))^2\right]
\\
&\qquad\le
2\frac{\tau^2(d)V_g(d)}{\tau^2(d)+V_g(d)}
+
2L_{p(g),d}^2\Delta_{g,p(g)}^2 .
\end{aligned}
\end{equation*}

For directly measured leaves, HARP sets \(\widehat\phi_g(d)=y_g(d)\); this shrinkage proposition applies to unmeasured leaves estimated by interpolation.
\end{proposition}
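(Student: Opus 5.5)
Three standard facts carry the proof: the posterior in a conjugate normal--normal model, the Bayes optimality of the posterior mean under squared loss, and the elementary inequality \((a+b)^2\le 2a^2+2b^2\). I would go through them in that order, suppressing the indices \(g,d\) throughout and writing \(\widetilde y=\widetilde y_g(d)\), \(\theta=\theta_g(d)\), \(V=V_g(d)\), \(\tau^2=\tau^2(d)\), \(\mu_0=\mu_0(d)\).

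\emph{Posterior.} I would multiply the likelihood \(\mathcal{N}(\widetilde y;\theta,V)\) by the prior \(\mathcal{N}(\theta;\mu_0,\tau^2)\) and complete the square in \(\theta\). This gives a Gaussian posterior with precision \(1/V+1/\tau^2\), hence variance \(\tau^2V/(\tau^2+V)\). The posterior mean is the precision-weighted average \((\widetilde y/V+\mu_0/\tau^2)/(1/V+1/\tau^2)\). Rewriting this average yields \(\rho\widetilde y+(1-\rho)\mu_0\) with \(\rho=\tau^2/(\tau^2+V)\), which is the displayed \(\widehat\phi_g(d)\).

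\emph{Bayes optimality and risk.} For any estimator \(\delta(\widetilde y)\), the conditional risk satisfies
\[
\mathbb{E}[(\delta-\theta)^2\mid\widetilde y]=\mathrm{Var}(\theta\mid\widetilde y)+(\delta-\mathbb{E}[\theta\mid\widetilde y])^2 .
\]
This is minimized by the posterior mean, so the posterior mean is the Bayes estimator. Its integrated risk is the expected posterior variance. Since the posterior variance does not depend on \(\widetilde y\), the risk equals \(\tau^2V/(\tau^2+V)\).

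\emph{Comparison with the unshrunk estimate.} The inequality \(\tau^2V/(\tau^2+V)\le V\) is immediate, because \(\tau^2/(\tau^2+V)\le 1\). The equality \(\mathbb{E}[(\widetilde y-\theta)^2]=V\) follows from the likelihood model, first conditionally on \(\theta\) and then by integrating over the prior.

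\emph{Error relative to the true main effect.} I would decompose \(\widehat\phi-\phi=(\widehat\phi-\theta)+(\theta-\phi)\) and apply \((a+b)^2\le 2a^2+2b^2\), then take expectations. The first term contributes at most \(2\tau^2V/(\tau^2+V)\) by the risk computation above. The second term contributes at most \(2L_{p(g),d}^2\Delta_{g,p(g)}^2\) by the bias bound \(|\theta-\phi|\le L_{p(g),d}\Delta_{g,p(g)}\) from Lemma~\ref{lem:rep_interpolation}.

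\emph{Main obstacle.} The only delicate point is what \(\phi_g(d)\) means inside the expectation. In the Bayesian model \(\theta\) is random, while \(\phi_g(d)\) is fixed, or random but coupled to \(\theta\). The bias bound from Lemma~\ref{lem:rep_interpolation} therefore has to hold for every realization of \(\theta\), that is, pointwise or almost surely. I would state explicitly that the deterministic bound \(|\theta-\phi|\le L\Delta\) is assumed in this sense. With that in place, the factor-2 bound goes through with no cross-term analysis.

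The remark on directly measured leaves needs no argument, because for them HARP sets \(\widehat\phi_g(d)=y_g(d)\) by definition and the proposition is not applied.
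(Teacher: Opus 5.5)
Your proposal is correct and follows the paper's proof step for step: the conjugate normal--normal posterior, the Bayes risk as the constant posterior variance $\tau^2V/(\tau^2+V)$, the comparison with $V=\mathbb{E}[(\widetilde y-\theta)^2]$, and the split $\widehat\phi-\phi=(\widehat\phi-\theta)+(\theta-\phi)$ combined with $(a+b)^2\le 2a^2+2b^2$. Your extra remark is a useful clarification of an assumption the paper's proof uses without comment: the bias bound $|\theta-\phi|\le L_{p(g),d}\Delta_{g,p(g)}$ has to hold for every realization of the prior-random $\theta$.
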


\begin{proof}
For readability, suppress \(g,d\) and write \(\widetilde y\), \(\theta\), \(\mu_0\), \(\tau^2\), and \(V\). The likelihood and prior are
\begin{equation*}
\begin{aligned}
\widetilde y\mid \theta
\sim
\mathcal{N}(\theta,V), \theta
\sim
\mathcal{N}(\mu_0,\tau^2).
\end{aligned}
\end{equation*}
By conjugacy of the normal-normal model, the posterior precision is
\[
\frac{1}{\tau^2}+\frac{1}{V},
\]
so the posterior variance is
\begin{equation*}
\begin{aligned}
\left(\frac{1}{\tau^2}+\frac{1}{V}\right)^{-1}
&=
\frac{\tau^2V}{\tau^2+V}.
\end{aligned}
\end{equation*}
The posterior mean is
\begin{equation*}
\begin{aligned}
&\frac{\frac{1}{V}\widetilde y+\frac{1}{\tau^2}\mu_0}{\frac{1}{V}+\frac{1}{\tau^2}}
=
\frac{\tau^2}{\tau^2+V}\widetilde y
+
\frac{V}{\tau^2+V}\mu_0.
\end{aligned}
\end{equation*}
Thus \(\rho=\tau^2/(\tau^2+V)\), giving the estimator in the proposition.

Under squared error loss, the posterior mean minimizes posterior expected squared error, so it is the Bayes estimator. Its integrated Bayes risk equals the expected posterior variance:
\begin{equation*}
\mathbb{E}\left[(\widehat\phi-\theta)^2\right]
=
\frac{\tau^2V}{\tau^2+V}.
\end{equation*}
The unshrunk estimator \(\widetilde y\) has integrated mean-squared error
\begin{equation*}
\begin{aligned}
\mathbb{E}\left[(\widetilde y-\theta)^2\right]
&=
\mathbb{E}[\varepsilon^2]
=
V.
\end{aligned}
\end{equation*}
Since \(\tau^2/(\tau^2+V)\le 1\),
\[
\frac{\tau^2V}{\tau^2+V}\le V.
\]
Therefore the shrinkage estimator has no larger integrated squared error than the unshrunk local estimate. Finally, the true-effect bound follows from \((a+b)^2\le 2a^2+2b^2\). Since \(\widehat\phi-\phi=(\widehat\phi-\theta)+(\theta-\phi)\) and \(|\theta-\phi|\le L_{p(g),d}\Delta_{g,p(g)}\),

\begin{equation*}
\begin{aligned}
\mathbb{E}\left[(\widehat\phi-\phi)^2\right]
&\le
2\mathbb{E}\left[(\widehat\phi-\theta)^2\right]
+
2(\theta-\phi)^2
\\
&\le
2\frac{\tau^2V}{\tau^2+V}
+
2L_{p(g),d}^2\Delta_{g,p(g)}^2 .
\end{aligned}
\end{equation*}

This proves the additional true main-effect error bound stated in the proposition.
\end{proof}

\begin{proposition}[MSE-to-uniform main-effect error event]
\label{prop:mse_uniform_event}
Suppose that for every leaf-domain pair \((g,d)\) there is a finite bound \(M_{g,d}\) such that
\[
\mathbb{E}\left[(\widehat\phi_g(d)-\phi_g(d))^2\right]
\le
M_{g,d}.
\]
For an unmeasured leaf, Proposition~\ref{prop:eb_shrinkage} gives one admissible choice
\[
M_{g,d}
=
2\frac{\tau^2(d)V_g(d)}{\tau^2(d)+V_g(d)}
+
2L_{p(g),d}^2\Delta_{g,p(g)}^2 .
\]
For a directly measured leaf with \(\widehat\phi_g(d)=y_g(d)=\phi_g(d)+\varepsilon_g(d)\), one may take any valid second-moment bound on \(\varepsilon_g(d)\), for example \(M_{g,d}\ge \mathbb{E}[\varepsilon_g(d)^2]\). Let
\[
M_\Sigma
=
\sum_{g=1}^{G}\sum_{d=1}^{D} M_{g,d}.
\]
Then, for any failure probability \(\alpha\in(0,1)\), with
\[
\eta_\alpha
=
\sqrt{\frac{M_\Sigma}{\alpha}},
\]
we have
\[
\Pr\!\left(
\max_{g\in[G],\,d\in[D]}
|\widehat\phi_g(d)-\phi_g(d)|
\le
\eta_\alpha
\right)
\ge
1-\alpha .
\]
Consequently, the deterministic condition \(|\widehat\phi_g(d)-\phi_g(d)|\le\eta\) used in the envelope sensitivity lemma can be read as conditioning on this uniform event, with \(\eta=\eta_\alpha\). 
\begin{proof}
Let
\[
Z
=
\sum_{g=1}^{G}\sum_{d=1}^{D}
(\widehat\phi_g(d)-\phi_g(d))^2 .
\]
By the assumed pairwise MSE bounds, \(\mathbb{E}[Z]\le M_\Sigma\). If
\[
\max_{g,d}|\widehat\phi_g(d)-\phi_g(d)|>\eta_\alpha,
\]
then \(Z>\eta_\alpha^2\). Therefore, by Markov's inequality,
\[
\begin{aligned}
&\Pr\!\left(
\max_{g,d}|\widehat\phi_g(d)-\phi_g(d)|>\eta_\alpha
\right)\\
&\quad\le
\Pr(Z>\eta_\alpha^2)
\le
\frac{\mathbb{E}[Z]}{\eta_\alpha^2}\\
&\quad\le
\frac{M_\Sigma}{M_\Sigma/\alpha}
=
\alpha .
\end{aligned}
\]
Taking complements proves the claim.
\end{proof}
\end{proposition}

\subsection{Train--Evaluate iteration Reduction from Representative Main-Effect Estimation}
\label{app:main_effect_query_reduction}

\begin{lemma}[Representative main-effect estimation reduces train--evaluate iterations]
\label{lem:main_effect_query_reduction}
Let \(\{\mathcal{L}_g\}_{g=1}^G\) be the leaf groups and let the parent partition be \(\{\mathcal{P}_p\}_{p=1}^P\). Suppose HARP measures \(r_p=|\mathcal{R}_p|\) representative leaves inside each nonempty parent \(p\), with \(1\le r_p\le |\mathcal{P}_p|\) and \(\mathcal{R}_p\subseteq\mathcal{P}_p\), and then estimates all unmeasured leaves using interpolation and empirical-Bayes shrinkage. Let
\[
R=\sum_{p=1}^P r_p .
\]
Then domainwise main-effect estimation requires \(R\) train--evaluate iterations instead of \(G\) iterations required by exhaustive singleton leaf evaluation. Thus HARP saves \(G-R\) singleton train--evaluate iterations and reduces the singleton measurement count by a factor of \(G/R\). If \(r_p\le r\) for every parent, then \(R\le Pr\).
\end{lemma}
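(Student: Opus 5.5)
The plan is a counting argument over the procedure's train--evaluate calls. I will separate the iterations that produce the main-effect matrix \(\widehat\Phi\) into two kinds: those that fine-tune on a leaf and evaluate it on the proxy set, and purely arithmetic post-processing. Only representative leaves are trained, and each measured representative \(r\) requires exactly one fine-tuning run on \(\mathcal{L}_r\) followed by one proxy evaluation. That single run yields \(y_r(d)\) for every domain \(d\) at once, since \(u_d(\mathcal{L}_r)\) is read off the same fine-tuned model on each \(\widetilde{\mathcal{E}}_d\). The base utilities \(v_0(d)\) are computed once from the base model and are shared, so they are not counted as leaf iterations. This gives exactly \(r_p\) iterations for parent \(p\).

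Next I will argue that unmeasured leaves add no iterations. For \(g\in\mathcal{P}_p\setminus\mathcal{R}_p\), the estimate \(\widetilde y_g(d)=\sum_{r\in\mathcal{R}_p}\alpha_{g,r}y_r(d)\) depends only on two ingredients. The first is the weights \(\alpha_{g,r}\), which are well defined because \(\mathcal{R}_p\neq\varnothing\) (the hypothesis \(r_p\ge1\), as in Lemma~\ref{lem:rep_interpolation}). The second is the already-measured values \(y_r(d)\). The shrinkage step of Proposition~\ref{prop:eb_shrinkage} uses \(\mu_0(d)\), \(\tau^2(d)\), \(\sigma^2_{p}(d)\) and \(n_{\mathrm{eff}}(g)\). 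All of these are functions of the measured effects and the embeddings, so again no fine-tuning is needed.

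Summing over parents, the parent partition is disjoint and \(\mathcal{R}_p\subseteq\mathcal{P}_p\), so the representative sets are disjoint and the total count is \(\sum_p r_p=R\). Exhaustive singleton leaf evaluation measures each of the \(G=\sum_p|\mathcal{P}_p|\) leaves once, and \(r_p\le|\mathcal{P}_p|\) gives \(R\le G\). The saving is therefore \(G-R\ge0\), and the ratio is \(G/R\), which is well defined because \(R\ge1\) whenever some parent is nonempty. Finally, if \(r_p\le r\) for all \(p\), then \(R=\sum_{p=1}^P r_p\le Pr\).

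The only delicate point is making the accounting convention explicit. One iteration means one fine-tune plus one proxy evaluation producing all \(D\) domain scores. With that convention, multi-domain measurement does not inflate the count, and the computation of \(v_0\) is excluded from the leaf-level tally. I will state this convention at the start of the proof; beyond that the argument is elementary.
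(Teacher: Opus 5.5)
Your proposal is correct and follows the same counting argument as the paper. One fine-tuning run per measured representative yields all \(D\) domain utilities, and interpolation plus empirical-Bayes shrinkage add no further runs, so the total is \(R=\sum_p r_p\) versus \(G\) for exhaustive evaluation, with \(R\le Pr\) when \(r_p\le r\). Your extra bookkeeping (disjointness of the \(\mathcal{R}_p\), \(R\le G\), \(R\ge 1\) so that \(G/R\) is defined, and leaving the base-model computation of \(v_0\) out of the tally) tightens the paper's proof but does not change its route.
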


\begin{proof}
Exhaustive singleton leaf evaluation measures \(u_d(\mathcal{L}_g)\) for every leaf \(g\in[G]\). Since one fine-tuning run on \(\mathcal{L}_g\) produces the vector of domain utilities \(\{u_d(\mathcal{L}_g)\}_{d=1}^D\), exhaustive singleton estimation requires exactly \(G\) train--evaluate iterations.

HARP instead directly measures only the representative leaves. In parent \(p\), it measures \(r_p=|\mathcal{R}_p|\) representatives. Therefore the total number of measured singleton leaves is
\[
R=\sum_{p=1}^P r_p.
\]
Interpolation and empirical-Bayes shrinkage then compute estimates for the remaining \(G-R\) leaves without additional fine-tuning runs. Hence the number of train--evaluate iterations is \(R\), saving \(G-R\) iterations compared with exhaustive singleton leaf evaluation. The reduction factor is \(G/R\). Finally, if \(r_p\le r\) for all \(p\), then
\begin{equation*}
\begin{aligned}
R
&=\sum_{p=1}^P r_p
\le
\sum_{p=1}^P r
=
Pr.
\end{aligned}
\end{equation*}
This completes the proof.
\end{proof}

\subsection{Theory for First-Order Envelope Selection}
\label{app:harp_fo_theory}

We prove a shared stability template for both HARP-C and HARP-E. The two
variants differ only in how estimation error propagates through the envelope:
HARP-C uses a maximum over positive effects and therefore has sensitivity
\((K+1)\eta\), while 
HARP-E is additive in signed main effects and has sensitivity
\(K\eta\).

The deterministic estimation condition used in Lemma~\ref{lem:fo_envelope_sensitivity} is either an explicit assumption or a high-probability event supplied by Proposition~\ref{prop:mse_uniform_event}.

For each active domain \(d\), define
\(\phi_g^+(d)=\max(\phi_g(d),0)\) and
\(\phi_g^-(d)=\max(-\phi_g(d),0)\). The raw HARP-C and HARP-E envelopes are
\begin{equation*}
q_d^{\mathrm{C}}(S)
=
b_d
+
\max_{g\in S}\phi_g^+(d)
-
\sum_{g\in S}\phi_g^-(d),
\end{equation*}
with the convention \(\max_{g\in\varnothing}\phi_g^+(d)=0\), and
\begin{equation*}
q_d^{\mathrm{E}}(S)
=
b_d
+
\sum_{g\in S}\phi_g^+(d)
-
\sum_{g\in S}\phi_g^-(d).
\end{equation*}
Equivalently, since \(\phi_g^+(d)-\phi_g^-(d)=\phi_g(d)\),

\begin{equation*}
q_d^{\mathrm{E}}(S)
=
b_d+
\sum_{g\in S}\phi_g(d).
\end{equation*}

For \(X\in\{\mathrm{C},\mathrm{E}\}\), define the clipped domain utility
\(y_d^X(S)=\mathrm{clip}_{[0,1]}(q_d^X(S))\) and the aggregate first-order
utility
\begin{equation*}
U_X(S)
=
\sum_{d\in\mathcal{A}_{\mathrm{dom}}}w_d y_d^X(S).
\end{equation*}
The estimated utility \(\widehat U_X(S)\) is defined analogously by replacing
\(\phi_g(d)\) with \(\widehat{\phi}_g(d)\). In particular,

\begin{equation*}
\widehat q_d^{\mathrm{E}}(S)
=
b_d+
\sum_{g\in S}\widehat\phi_g(d).
\end{equation*}

\begin{lemma}[Envelope clipping stability]
\label{lem:fo_envelope_clipping}
Fix \(X\in\{\mathrm{C},\mathrm{E}\}\). For each active domain \(d\), suppose
the true domain utility \(u_d(S)\in[0,1]\) satisfies
\begin{equation*}
\left|u_d(S)-q_d^X(S)\right|
\le
\xi_d^X(S)
\end{equation*}
for every feasible set \(S\). Let
\({U_{\mathcal{A}}(S)}=\sum_{d\in\mathcal{A}_{\mathrm{dom}}}w_d u_d(S)\), using the same
active-domain set and weights as \(U_X\). Then
\begin{equation*}
\left|u_d(S)-y_d^X(S)\right|
\le
\xi_d^X(S),
\end{equation*}
and therefore, for nonnegative active-domain weights,
\begin{equation*}
\left|{U_{\mathcal{A}}(S)}-U_X(S)\right|
\le
\sum_{d\in\mathcal{A}_{\mathrm{dom}}}w_d\xi_d^X(S).
\end{equation*}
If the paper-level utility \(U(S)\) is evaluated on the full downstream suite rather than only the active domains, then the difference between \(U(S)\) and \(U_{\mathcal{A}}(S)\), together with proxy-evaluation error, is absorbed into the approximation term \(\epsilon_X\) in Theorem~\ref{thm:shared_fo_stability}.
\end{lemma}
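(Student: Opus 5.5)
The plan is to prove the per-domain statement first and then aggregate. Fix a feasible set \(S\), an active domain \(d\), and \(X\in\{\mathrm{C},\mathrm{E}\}\). The key observation is that the clipping map \(\mathrm{clip}_{[0,1]}(t)=\min\{1,\max\{0,t\}\}\) is the Euclidean projection onto the interval \([0,1]\). Because \(u_d(S)\) already lies in \([0,1]\), it is a fixed point of this projection, that is, \(u_d(S)=\mathrm{clip}_{[0,1]}(u_d(S))\).

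Projection onto a closed convex set is \(1\)-Lipschitz, so
\begin{equation*}
\bigl|u_d(S)-y_d^X(S)\bigr|
=
\bigl|\mathrm{clip}_{[0,1]}(u_d(S))-\mathrm{clip}_{[0,1]}(q_d^X(S))\bigr|
\le
\bigl|u_d(S)-q_d^X(S)\bigr|
\le
\xi_d^X(S).
\end{equation*}
To keep the argument self-contained, I would also check the \(1\)-Lipschitz property of the clip directly by cases on \(t=q_d^X(S)\).
\begin{itemize}
\item If \(t\in[0,1]\), clipping does nothing and the inequality is an equality.
\item If \(t>1\), then \(y_d^X(S)=1\), and \(|u-1|=1-u\le t-u=|u-t|\) since \(u\le 1<t\).
\item If \(t<0\), then \(y_d^X(S)=0\), and \(|u-0|=u\le u-t=|u-t|\) since \(t<0\le u\).
\end{itemize}
In each case the clipped error is at most the raw error.

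For the aggregate bound, I would write
\begin{equation*}
U_{\mathcal{A}}(S)-U_X(S)=\sum_{d\in\mathcal{A}_{\mathrm{dom}}}w_d\bigl(u_d(S)-y_d^X(S)\bigr),
\end{equation*}
which holds because both quantities use the same active-domain set and weights. Applying the triangle inequality and \(w_d\ge 0\) to pull the absolute value inside, then the per-domain bound, gives \(\sum_{d\in\mathcal{A}_{\mathrm{dom}}} w_d\xi_d^X(S)\).

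The final sentence of the lemma is definitional bookkeeping. It says that any discrepancy between \(U\) and \(U_{\mathcal{A}}\), including proxy error as controlled by Lemma~\ref{lem:proxy_stability}, is folded into \(\epsilon_X\). I would note this with a triangle inequality \(|U-U_X|\le|U-U_{\mathcal{A}}|+|U_{\mathcal{A}}-U_X|\) rather than prove anything new.

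There is no real obstacle. The only point needing care is that the contraction argument uses \(u_d(S)\in[0,1]\) essentially: without it, clipping could increase the error.
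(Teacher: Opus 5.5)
Your proposal is correct and matches the paper's proof: both observe that \(u_d(S)\in[0,1]\) is a fixed point of \(\mathrm{clip}_{[0,1]}\), apply the \(1\)-Lipschitz property of clipping, and aggregate over \(\mathcal{A}_{\mathrm{dom}}\) by the triangle inequality with nonnegative weights. Your explicit case check of the contraction, and your reading of the final sentence as bookkeeping absorbed into \(\epsilon_X\), spell out points the paper leaves implicit.
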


\begin{proof}
Fix \(d\), \(S\), and \(X\in\{\mathrm{C},\mathrm{E}\}\). Since
\(u_d(S)\in[0,1]\), we have
\(u_d(S)=\mathrm{clip}_{[0,1]}(u_d(S))\). The clipping map is
\(1\)-Lipschitz, so
\begin{equation*}
\begin{aligned}
& \left|u_d(S)-y_d^X(S)\right| 
 \\ &=
\left|
\mathrm{clip}_{[0,1]}(u_d(S))
-
\mathrm{clip}_{[0,1]}(q_d^X(S))
\right|  \\
&\le
\left|u_d(S)-q_d^X(S)\right|
\le
\xi_d^X(S).
\end{aligned}
\end{equation*}
Multiplying by \(w_d\), summing over active domains, and applying the triangle
inequality gives
\begin{equation*}
\left|{U_{\mathcal{A}}(S)}-U_X(S)\right|
\le
\sum_{d\in\mathcal{A}_{\mathrm{dom}}}w_d\xi_d^X(S).
\end{equation*}
\end{proof}

\begin{theorem}[Shared first-order stability template]
\label{thm:shared_fo_stability}
Fix \(X\in\{\mathrm{C},\mathrm{E}\}\). Let \(U_X\) be the first-order utility
defined with the true main effects \(\phi_g(d)\), and let \(\widehat U_X\) be
its estimated version defined with \(\widehat{\phi}_g(d)\). Let
\(S^\star\in\arg\max_{c(S)\le B}U(S)\), where the maximum is taken over
feasible leaf sets, and let \(\widehat S_X\) be the set returned by the
corresponding HARP variant. Suppose that, for every feasible set \(S\),
\begin{equation*}
|U(S)-U_X(S)|\le \epsilon_X
\end{equation*}
and
\begin{equation*}
|\widehat U_X(S)-U_X(S)|\le \delta_X .
\end{equation*}
If the greedy procedure returns an \(\epsilon_{\mathrm{opt}}\)-approximate
maximizer of \(\widehat U_X\) under the budget, where \(\epsilon_{\mathrm{opt}}\)
denotes its optimization gap, then
\begin{equation*}
U(S^\star)-U(\widehat S_X)
\le
2\epsilon_X+2\delta_X+\epsilon_{\mathrm{opt}}.
\end{equation*}
\end{theorem}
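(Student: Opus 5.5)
The plan is the standard three-way triangle-inequality chain, passing through the two surrogate objectives. Write \(\widehat S=\widehat S_X\). Note first that both \(S^\star\) and \(\widehat S\) are budget-feasible: \(S^\star\) by definition, and \(\widehat S\) because HARP only returns feasible prefixes with \(c(S_k)\le B\). The two uniform hypotheses \(|U(S)-U_X(S)|\le\epsilon_X\) and \(|\widehat U_X(S)-U_X(S)|\le\delta_X\) therefore apply at both sets. Interpret the \(\epsilon_{\mathrm{opt}}\)-approximate maximizer condition as
\[
\widehat U_X(\widehat S)\ge \max_{c(S)\le B}\widehat U_X(S)-\epsilon_{\mathrm{opt}}\ge \widehat U_X(S^\star)-\epsilon_{\mathrm{opt}}.
\]
Only the second inequality, the comparison against the fixed feasible set \(S^\star\), is actually needed.

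The chain then transfers from true utility to surrogate, to estimated surrogate, across the optimizer, and back:
\[
U(S^\star)\le U_X(S^\star)+\epsilon_X\le \widehat U_X(S^\star)+\delta_X+\epsilon_X\le \widehat U_X(\widehat S)+\epsilon_{\mathrm{opt}}+\delta_X+\epsilon_X,
\]
and
\[
\widehat U_X(\widehat S)\le U_X(\widehat S)+\delta_X\le U(\widehat S)+\delta_X+\epsilon_X.
\]
Combining the two displays gives \(U(S^\star)-U(\widehat S)\le 2\epsilon_X+2\delta_X+\epsilon_{\mathrm{opt}}\). Each error term appears exactly twice, once at \(S^\star\) and once at \(\widehat S\), which accounts for the factor of two.

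There is no real technical obstacle; the only delicate point is interpretive. The stated \(\epsilon_{\mathrm{opt}}\) is a gap relative to the optimum of \(\widehat U_X\) over all budget-feasible leaf sets, not merely over greedy prefixes. We will make sure the comparison set includes \(S^\star\), which holds because \(S^\star\) ranges over the same feasible leaf sets. We will also remark that no structure of the envelopes (clipping, max, or sum) is used here. That structure enters only through \(\delta_X\), which Lemma~\ref{lem:fo_envelope_sensitivity} bounds, and through \(\epsilon_X\), which Lemma~\ref{lem:fo_envelope_clipping} controls.
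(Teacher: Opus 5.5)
Your proposal is correct and follows the paper's proof: it uses the same four-link chain through \(U_X\) and \(\widehat U_X\) at \(S^\star\) and at \(\widehat S_X\), with the \(\epsilon_{\mathrm{opt}}\) step comparing against the fixed feasible set \(S^\star\). Your explicit check that both \(S^\star\) and \(\widehat S_X\) are budget-feasible, so the uniform hypotheses apply at both, is left implicit in the paper.
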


\begin{proof}
By the approximation assumption,
\begin{equation*}
U(S^\star)
\le
U_X(S^\star)+\epsilon_X .
\end{equation*}
By the estimation assumption,
\begin{equation*}
U_X(S^\star)
\le
\widehat U_X(S^\star)+\delta_X .
\end{equation*}
Since \(\widehat S_X\) is an \(\epsilon_{\mathrm{opt}}\)-approximate maximizer
of \(\widehat U_X\),
\begin{equation*}
\widehat U_X(S^\star)
\le
\widehat U_X(\widehat S_X)
+
\epsilon_{\mathrm{opt}} .
\end{equation*}
Applying the estimation and approximation assumptions again,
\begin{equation*}
\begin{aligned}
\widehat U_X(\widehat S_X)
&\le
U_X(\widehat S_X)+\delta_X  \\
&\le
U(\widehat S_X)+\delta_X+\epsilon_X .
\end{aligned}
\end{equation*}
Combining the four inequalities gives
\begin{equation*}
U(S^\star)-U(\widehat S_X)
\le
2\epsilon_X+2\delta_X+\epsilon_{\mathrm{opt}}.
\end{equation*}
\end{proof}

\begin{lemma}[Envelope sensitivity to main-effect estimation error]
\label{lem:fo_envelope_sensitivity}
Suppose every feasible set contains at most \(K\) leaves, the active-domain
set \(\mathcal{A}_{\mathrm{dom}}\) is fixed, the active-domain weights are nonnegative and weights sum to one, \(b_d\) and \(w_d\) are fixed, and
\(|\widehat{\phi}_g(d)-\phi_g(d)|\le \eta\) for all \(g,d\). Then
\begin{equation*}
\delta_{\mathrm{C}}\le (K+1)\eta
\qquad\text{and}\qquad
\delta_{\mathrm{E}}\le {K\eta} .
\end{equation*}
\end{lemma}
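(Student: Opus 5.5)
The plan is to reduce everything to a per-domain, per-set bound on the raw unclipped envelopes and then pass it through clipping and the weighted average. Fix a feasible set \(S\) with \(|S|\le K\) and an active domain \(d\). Clipping to \([0,1]\) is \(1\)-Lipschitz, as already used in Lemma~\ref{lem:fo_envelope_clipping}, so
\[
|\widehat y_d^X(S)-y_d^X(S)|\le |\widehat q_d^X(S)-q_d^X(S)|
\]
for \(X\in\{\mathrm{C},\mathrm{E}\}\), where \(\widehat q_d^X\) denotes the raw envelope built from \(\widehat\phi_g(d)\). The weights \(w_d\) are nonnegative and sum to one, and \(\mathcal{A}_{\mathrm{dom}}\), \(b_d\), \(w_d\) are shared between the true and estimated objectives. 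Hence
\[
|\widehat U_X(S)-U_X(S)|\le \max_{d\in\mathcal{A}_{\mathrm{dom}}}|\widehat q_d^X(S)-q_d^X(S)|.
\]
It therefore suffices to bound the raw difference uniformly over \(d\) and feasible \(S\). The common baseline \(b_d\) cancels in that difference.

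For HARP-E, we use the signed form \(q_d^{\mathrm{E}}(S)=b_d+\sum_{g\in S}\phi_g(d)\) and the analogous estimated form. The difference is then \(\sum_{g\in S}(\widehat\phi_g(d)-\phi_g(d))\). By the triangle inequality this is at most \(|S|\eta\le K\eta\), which gives \(\delta_{\mathrm{E}}\le K\eta\).

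For HARP-C, we split the raw difference into a max term and a harm term. The first ingredient is that the positive-part and negative-part maps \(t\mapsto\max(t,0)\) and \(t\mapsto\max(-t,0)\) are \(1\)-Lipschitz. Hence \(|\widehat\phi_g^{\pm}(d)-\phi_g^{\pm}(d)|\le\eta\) for every \(g\). The harm term \(\sum_{g\in S}(\widehat\phi_g^-(d)-\phi_g^-(d))\) is then bounded by \(K\eta\) exactly as in the E case.

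The max term is the only step needing a small argument. We show \(|\max_{g\in S}a_g-\max_{g\in S}a'_g|\le\max_{g\in S}|a_g-a'_g|\le\eta\) with \(a_g=\widehat\phi_g^+(d)\) and \(a'_g=\phi_g^+(d)\). Take \(g^\ast\) attaining the first maximum; then \(\max a-\max a'\le a_{g^\ast}-a'_{g^\ast}\le\eta\), and the symmetric argument gives the other direction. For \(S=\varnothing\) both maxima are \(0\) by convention, so the bound holds trivially.

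Summing the two contributions gives \(|\widehat q_d^{\mathrm{C}}(S)-q_d^{\mathrm{C}}(S)|\le(K+1)\eta\), hence \(\delta_{\mathrm{C}}\le(K+1)\eta\). No step is a real obstacle; the only care needed is handling the maximum, including the empty-set convention, and checking that the fixed active-domain set and weights make the averaging step legitimate.
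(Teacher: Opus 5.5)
Your proof is correct and follows the paper's argument essentially step for step: $1$-Lipschitz positive and negative parts, an $\eta$ bound on the difference of maxima plus a $K\eta$ bound on the harm sum for HARP-C, and the signed-sum form giving $K\eta$ for HARP-E, followed by $1$-Lipschitz clipping and the convex weighted average over active domains. The only differences are that you spell out the argmax argument for $|\max_{g\in S}a_g-\max_{g\in S}a'_g|\le\max_{g\in S}|a_g-a'_g|$, which the paper asserts without justification, and that you apply clipping and averaging at the start rather than at the end.
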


\begin{proof}
Since \(x\mapsto \max(x,0)\) and \(x\mapsto \max(-x,0)\) are both
\(1\)-Lipschitz, the assumption
\(|\widehat{\phi}_g(d)-\phi_g(d)|\le \eta\) implies
\begin{equation*}
|\widehat{\phi}_g^+(d)-\phi_g^+(d)|\le \eta
\end{equation*}
and
\begin{equation*}
|\widehat{\phi}_g^-(d)-\phi_g^-(d)|\le \eta .
\end{equation*}
First consider HARP-C. For any feasible set \(S\) with \(|S|\le K\),
\begin{equation*}
\left|
\max_{g\in S}\widehat{\phi}_g^+(d)
-
\max_{g\in S}\phi_g^+(d)
\right|
\le
\eta,
\end{equation*}
where the empty-set convention gives zero on both sides when \(S=\varnothing\).
For the negative part,
\begin{equation*}
\left|
\sum_{g\in S}\widehat{\phi}_g^-(d)
-
\sum_{g\in S}\phi_g^-(d)
\right|
\le
K\eta .
\end{equation*}
Thus the raw HARP-C domain-envelope error is at most \((K+1)\eta\). Since
clipping to \([0,1]\) is \(1\)-Lipschitz, the clipped domain-utility error is
also at most \((K+1)\eta\). Summing over active domains with nonnegative weights that sum to
one gives
\begin{equation*}
|\widehat U_{\mathrm{C}}(S)-U_{\mathrm{C}}(S)|
\le
(K+1)\eta .
\end{equation*}
Hence \(\delta_{\mathrm{C}}\le (K+1)\eta\).

Now consider HARP-E. Because

\begin{equation*}
\phi_g^+(d)-\phi_g^-(d)=\phi_g(d)
\text{ and }
\widehat\phi_g^+(d)-\widehat\phi_g^-(d)=\widehat\phi_g(d),
\end{equation*}

we have

\begin{equation*}
q_d^{\mathrm{E}}(S)=b_d+\sum_{g\in S}\phi_g(d),
\quad
\widehat q_d^{\mathrm{E}}(S)=b_d+\sum_{g\in S}\widehat\phi_g(d).
\end{equation*}

Therefore, for any feasible set \(S\) with \(|S|\le K\),

\begin{equation*}
\begin{aligned}
\left|
\widehat q_d^{\mathrm{E}}(S)
-
q_d^{\mathrm{E}}(S)
\right|
&=
\left|
\sum_{g\in S}
\left(
\widehat\phi_g(d)-\phi_g(d)
\right)
\right|
\\
&\le
\sum_{g\in S}
|\widehat\phi_g(d)-\phi_g(d)|
\\
&\le
K\eta .
\end{aligned}
\end{equation*}

Again, clipping to \([0,1]\) is \(1\)-Lipschitz, so the clipped HARP-E domain-utility error is at most \(K\eta\). Summing over active domains with nonnegative weights that sum to one gives

\begin{equation*}
|\widehat U_{\mathrm{E}}(S)-U_{\mathrm{E}}(S)|
\le
K\eta .
\end{equation*}

Thus \(\delta_{\mathrm{E}}\le K\eta\).
\end{proof}

Combining Theorem~\ref{thm:shared_fo_stability} with
Lemma~\ref{lem:fo_envelope_sensitivity} gives the stability statements for
HARP-C and HARP-E in the main text. The shared theorem explains why both
variants have the same high-level stability form. The sensitivity lemma shows that HARP-C incurs one positive-side maximum error and up to \(K\) negative-side errors, giving \((K+1)\eta\). HARP-E, although written with positive and negative parts, is additive in the signed main effects and therefore incurs only \(K\) signed-effect errors, giving \(K\eta\).
\section{Experimental Hyperparameters}
\label{app:hyperparameters}

We split the hyperparameters into two tables. Table~\ref{tab:hp_shared}
lists the settings that are \emph{shared} by HARP and all baselines —
LoRA configuration, optimizer, sequence-length prefilter (per training
pool), and evaluation decoding.
Table~\ref{tab:hp_harp} lists the HARP-specific hyperparameters
($\rho$, $M$, $L$, $C_{\min}$, $R$, $B$, and the proxy evluation set sampling
strategy and embedding model); those are the parameters ablated in
Section~\ref{sec:exp_ablation}.

\begin{table}[t]
\centering
\caption{Shared hyperparameters used by HARP and all baselines.}
\label{tab:hp_shared}
\small
\setlength{\tabcolsep}{6pt}
\renewcommand{\arraystretch}{1.15}
\resizebox{0.47\textwidth}{!}{%
\begin{tabular}{@{}p{0.66\columnwidth} r@{}}
\toprule
\textbf{Hyperparameter} & \textbf{Value} \\
\midrule
\multicolumn{2}{@{}l}{\textit{LoRA fine-tuning (Stage~2)}} \\
LoRA rank $r$ & $16$ \\
LoRA $\alpha$ & $32$ \\
LoRA dropout & $0.05$ \\
LoRA target modules & all linear projections \\
Optimizer & AdamW \\
Learning rate & $2{\times}10^{-4}$ \\
Effective batch size & $16$ \\
Epochs & $3$ \\
\midrule
\multicolumn{2}{@{}l}{\textit{Prefilter: sequence-length cap per training pool}} \\
Self-Instruct (SI) & $512$ \\
WizardLM-Evol-70k (Wiz) & $1{,}024$ \\
Tulu-3-SFT-mixture (Tulu) & $1{,}024$ \\
\midrule
\multicolumn{2}{@{}l}{\textit{Per-(model, seq-len) micro batch / grad-accum}} \\
\textsc{Qwen3-4B-Base}, $\le 1024$ & $16 / 1$ \\
\textsc{Qwen3-4B-Base}, $2048$ & $8 / 2$ \\
\textsc{Llama-3.1-8B-Base} / \textsc{Qwen3-8B-Base}, $512$ & $8 / 2$ \\
\textsc{Llama-3.1-8B-Base} / \textsc{Qwen3-8B-Base}, $1024$ & $4 / 4$ \\
\textsc{Llama-3.1-8B-Base} / \textsc{Qwen3-8B-Base}, $2048$ & $2 / 8$ \\
\midrule
\multicolumn{2}{@{}l}{\textit{Evaluation (Stage~3, vLLM)}} \\
Decoding & greedy (temp $0$) \\
Max new tokens, MMLU/ARC & $128$ \\
Max new tokens, GSM8K/MATH & $512$ \\
GPU memory utilization & $0.95$ \\
\midrule
\multicolumn{2}{@{}l}{\textit{Repetition}} \\
Random seeds per cell & $\{42, 1, 2\}$ \\
\bottomrule
\end{tabular}
}
\end{table}

\begin{table}[t]
\centering
\caption{HARP-specific hyperparameters.}
\label{tab:hp_harp}
\small
\setlength{\tabcolsep}{6pt}
\renewcommand{\arraystretch}{1.15}
\begin{tabular}{@{}p{0.62\columnwidth} r@{}}
\toprule
\textbf{Hyperparameter} & \textbf{Value} \\
\midrule
\multicolumn{2}{@{}l}{\textit{Proxy eval set (Sec.~\ref{subsec:proxy_eval})}} \\
Sampling rule (in-domain) & $k$-means++ \\
Proxy fraction $\rho$ & $0.10$ \\
Minimum proxy size $K_{\mathrm{proxy}}$ & $100$ \\
Bootstrap-bucket floor $k_{\mathrm{boot}}$ & $20$ \\
Embedding model & \texttt{MiniLM-L6} \\
\midrule
\multicolumn{2}{@{}l}{\textit{Hierarchy (Sec.~\ref{subsec:hierarchy})}} \\
Max leaf size $C_{\max}$ & $1024$ \\
Min leaf size $C_{\min}$ & $256$ \\
\midrule
\multicolumn{2}{@{}l}{\textit{Main-effect estimation (Sec.~\ref{subsec:main_effect_estimation})}} \\
Reps per parent $|\mathcal{R}_p|$ & $3$ \\
EB prior variance $\tau^2(d)$ & $0.01$ \\
SE floor & $10^{-3}$ \\
\midrule
\multicolumn{2}{@{}l}{\textit{Envelope selection (Secs.~\ref{subsec:harp_fo_cons}--\ref{subsec:harp_fo_opt})}} \\
Training budget $B$ (best-prefix cap) & $10{,}000$ \\
\bottomrule
\end{tabular}
\end{table}

\section{Computational Complexity}
\label{app:complexity}

Table~\ref{tab:complexity} summarizes the per-cell selection and
final-training costs of HARP and the baselines. Let $N$ denote the full
training set size and $B{=}10{,}000$ the selection budget used by all
budgeted baselines. Let $T_L(x)$ denote the finetuning cost on $x$
examples, which is linear in $x$ for a fixed number of epochs. The
selection-time variables are: $F$ for the n-gram feature dimension in
DSIR; $K_c$ and $\bar c$ for the number and average size of clusters in
DQ and SHED; $K_{\mathrm{proxy}}$ for the proxy evaluation set size; $L$
for the leaf size cap; $R$ for the number of representative leaves per
parent used in main-effect estimation; $T_{\nabla}$ for the cost of
computing gradient features for one example in LESS; and
$T_{\mathrm{PG}}$ for the cost of one policy-gradient influence estimate
in NICE. The HARP-selected subset size $|S|$ is variable: across the 24
SI and Wiz cells in Table~\ref{tab:main_results}, HARP-C selects
$\sim 1{,}500$ examples on average, while HARP-E selects $\sim 4{,}400$.

We include \textsc{LESS}~\cite{less} and
\textsc{NICE}~\cite{wang2025nice} for completeness. \textsc{LESS}
builds a low-dimensional gradient datastore by running a
forward--backward pass on each training example, then selects examples
by similarity to a few-shot target set. \textsc{NICE} extends influence
estimation to non-differentiable evaluation metrics through
policy-gradient-style estimates, requiring repeated metric evaluation
for candidate examples. Both methods scale linearly in $N$ with a large
per-example constant: \textsc{LESS} costs $O(N \cdot T_{\nabla})$, while
\textsc{NICE} costs $O(N \cdot T_{\mathrm{PG}})$. We therefore list them
for context but do not run them in the main 36-cell study.

Although HARP trains small models during selection, its total finetuning
work can fall below the $10$k-budget training-free baselines when the
selected subset is small. Random, DSIR, and DQ each require
$T_L(B) \approx 30{,}000$ example-epochs of final finetuning
($10{,}000$ examples for $3$ epochs). On
\textsc{Llama-3.1-8B}/Wiz/MMLU seed 1, HARP trains $39$ of $97$ leaves,
each with roughly $700$ examples for one epoch, costing about $27{,}200$
example-epochs. The final finetuning uses $|S|{=}718$ examples for $3$
epochs, adding $2{,}154$ example-epochs. The total is therefore
$\approx 29{,}400$ example-epochs, below the $30{,}000$ example-epochs
used by the training-free baselines, while \textsc{HARP-C} still obtains
the best result in the corresponding row of Table~\ref{tab:main_results}
($63.8$ vs.\ DSIR $61.8$, DQ $62.5$, and Random $62.4$).

The train-based baselines pay selection costs tied to the full training
dataset. \textsc{SHED} runs one finetuning pass per cluster
($O(K_c \cdot T_L(\bar c))$), with $K_c$ often in the hundreds.
\textsc{LESS} and \textsc{NICE} require per-example gradient computation
or policy-gradient influence estimation. HARP instead trains only
$R \cdot (N/L)$ representative leaves, with default $R{=}3$ corresponding
to roughly $40\%$ leaf coverage, and each leaf contains at most
$1{,}024$ examples for one epoch. It then finetunes the final model on
$|S| \ll B$. By amortizing utility estimation across the hierarchy, HARP
avoids both SHED's per-cluster finetuning loop and the per-example
gradient or metric evaluation required by LESS and NICE.

\begin{table}[t]
\centering
\caption{Per-cell computational complexity of HARP and the
baselines. $T_L(x)$ is the finetuning cost on $x$ examples;
$N$ is the full training pool size; $B{=}10{,}000$ is the baseline
selection budget; $F$, $K_c$, $\bar c$, $K_{\mathrm{proxy}}$, $L$,
$R$, $T_{\nabla}$, $T_{\mathrm{PG}}$, and $|S|$ are defined above.
HARP-C and HARP-E share the same selection pipeline and differ
only in the envelope used to choose the final subset, so their
selection costs are identical. Rows marked $^{\dagger}$ are not run
in our main experiments due to selection-time cost.}
\label{tab:complexity}
\small
\setlength{\tabcolsep}{4pt}
\renewcommand{\arraystretch}{1.15}
\resizebox{0.47\textwidth}{!}{%
\begin{tabular}{@{}l l l@{}}
\toprule
\textbf{Method} & \textbf{Selection cost} & \textbf{Final training} \\
\midrule
Random   & $O(B)$ sampling              & $T_L(B)$ \\
DSIR     & $O(NF)$ feature pass         & $T_L(B)$ \\
DQ       & $O(N K_c)$ submodular        & $T_L(B)$ \\
\midrule
Full-FT  & ---                          & $T_L(N)$ \\
SHED-W/O & $O(K_c \cdot T_L(\bar c))$   & $T_L(B)$ \\
LESS$^{\dagger}$ & $O(N \cdot T_{\nabla})$ gradient datastore  & $T_L(B)$ \\
NICE$^{\dagger}$ & $O(N \cdot T_{\mathrm{PG}})$ policy-grad.\ inf. & $T_L(B)$ \\
\midrule
HARP-C/E & $T_L(K_{\mathrm{proxy}}) + R \cdot (N/L) \cdot T_L(L)$ & $T_L(|S|)$ \\
\bottomrule
\end{tabular}
}
\end{table}

\section{Additional Ablation Studies}
\label{app:ablation_figures}

Due to space, we report two additional ablations in the appendix:
(1) proxy evaluation set size $M$
(Figure~\ref{fig:ablation_stage1a} left two panels), (2) Proxy evaluation set sampling strategy (Figure~\ref{fig:ablation_stage1a} right two panels), and (3) per-component compute
across the $L$, $R$, and $M$ sweeps
(Figure~\ref{fig:ablation_flops}). All runs use
\textsc{Qwen3-4B-Base} on \textsc{WizardLM} with three random
seeds, matching the main paper ablation setup.

\begin{figure*}[h]
\centering
\includegraphics[width=\linewidth]{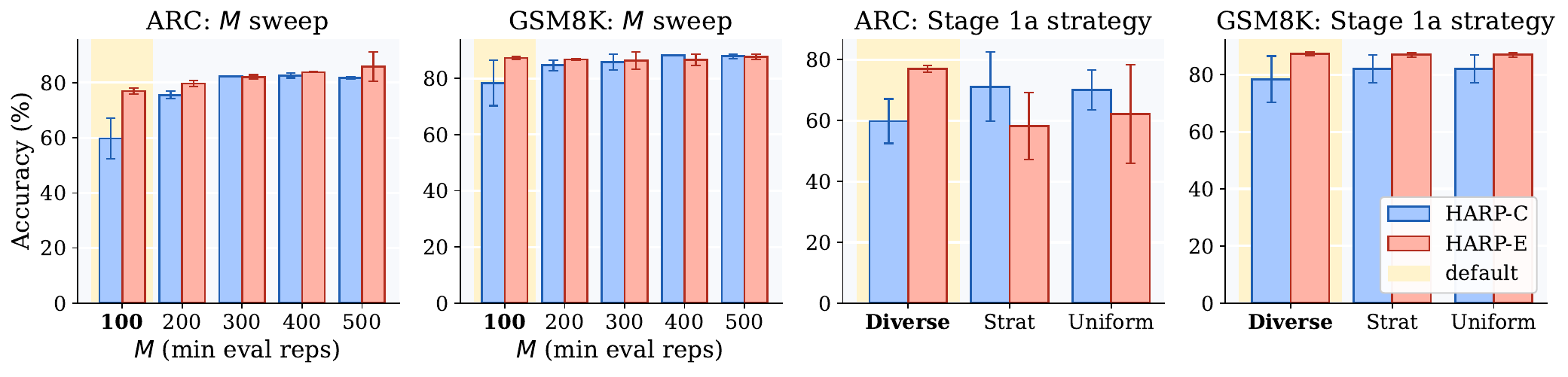}
\caption{\emph{Left two:} experiments on the minimum proxy size $M$.
\emph{Right two:} experiments on proxy evaluation set sampling strategy.}
\label{fig:ablation_stage1a}
\end{figure*}

\begin{figure*}[h]
\centering
\includegraphics[width=\linewidth]{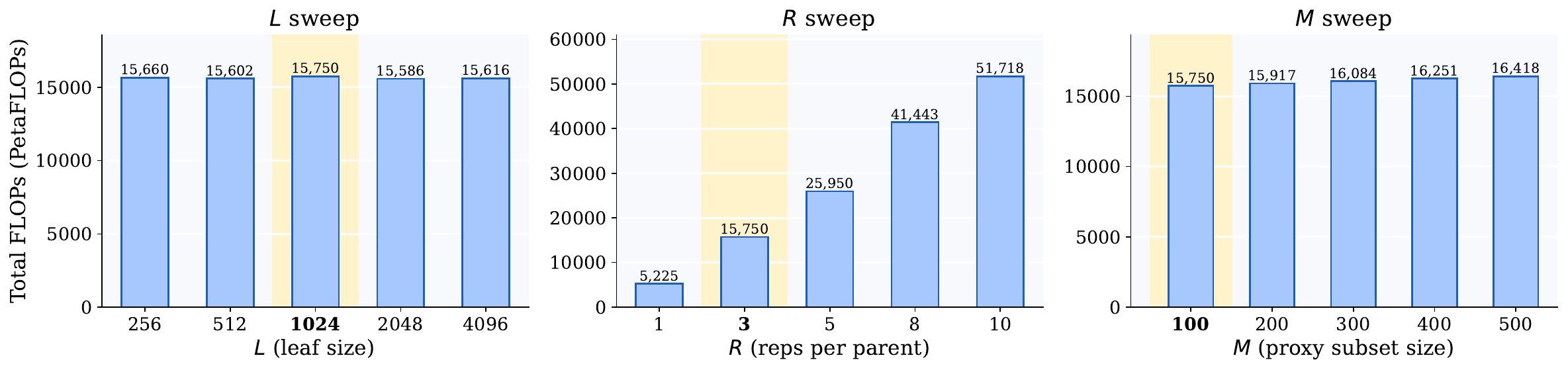}
\caption{Pipeline FLOPs across the $L$, $R$, and $M$ ablations.}
\label{fig:ablation_flops}
\end{figure*}

\noindent\textbf{Minimum proxy evaluation set size $M$ (Figure~\ref{fig:ablation_stage1a},
left two panels).}
$M$ denotes the minimum number of examples used to construct the
proxy evaluation set (Section~\ref{subsec:proxy_eval}). In realistic
settings, the full evaluation set is often unavailable, private, or too
large to use during data selection. We therefore use a compact proxy
evaluation set, with $M{=}100$ as the default for both HARP and
\textsc{SHED} to ensure a fair comparison rather than as a tuned
hyperparameter. To test whether HARP is sensitive to a larger and more
fine-grained proxy, we sweep $M \in \{200, 300, 400, 500\}$ on ARC and
GSM8K. Across the sweep, both \textsc{HARP-C} and \textsc{HARP-E} remain
close to the default setting on each benchmark. We therefore report these
curves as sensitivity checks rather than as a tuning prescription.

\noindent\textbf{Proxy sampling strategy
(Figure~\ref{fig:ablation_stage1a}, right two panels).}
We fix $M{=}100$ and compare three strategies for constructing the proxy
evaluation set: diverse $k$-means++ sampling over embedding vectors
(the default), stratified random sampling within each evaluation subject,
and uniform random sampling over the full evaluation set.
On multi-domain ARC, diverse $k$-means++ produces the most stable
\textsc{HARP-E} performance across seeds ($77.0 \pm 1.1$), whereas the
two random policies fall in the $58$--$62$ range and exhibit
$5$--$15\times$ larger cross-seed standard deviation.
On single-domain GSM8K, the three strategies are nearly equivalent, as
expected, since stratification provides little benefit when the evaluation
set has only one bucket.

\noindent\textbf{Pipeline FLOPs across $L$, $R$, and $M$
(Figure~\ref{fig:ablation_flops}).}
Main-effect estimation, which requires per-leaf bootstrap LoRA training,
dominates total hardware-independent FLOPs by $100$--$2000\times$ across
all three sweeps. The $L$ sweep keeps total FLOPs essentially constant
($\sim 15.6$ exa-FLOPs), because the leaf product
$N_{\text{leaves}} \cdot \bar C = |\mathcal D|$ is invariant in $L$.
The $R$ sweep scales linearly, with
$5.2, 25.9, 41.4, 51.7$ exa-FLOPs for
$R \in \{1, 5, 8, 10\}$, respectively.
The $M$ sweep is essentially flat ($15.7$--$16.4$ exa-FLOPs), since the
cost of evaluating representative leaves on a larger proxy set is small
relative to the cost of training those leaves.

\end{document}